\let\oldvec\vec
\let\vec\oldvec
\newcommand{\pb}[1]{\psshadowbox[linewidth=0.4pt,linecolor=black,shadowcolor=lightgray]{#1}}
\newcommand{\ps}[1]{\psshadowbox[linewidth=0.4pt,fillcolor=red!10,fillstyle=solid,linecolor=black,shadowcolor=lightgray]{#1}}
\newcommand{\pc}[1]{\psshadowbox[linewidth=0.4pt,fillcolor=green!10,fillstyle=solid,linecolor=black,shadowcolor=lightgray]{#1}}
\newcommand{\degen}{\ps{\parbox{40pt}{\centering $\widetilde E_{ij} = 0_{3\times 3}$}}}
\newcommand{\collin}[1]{\pc{\parbox{45pt}{\centering collinear\\ compatible\\triplet~\eqref{#1}}}}
\newcommand{\compat}[1]{\pc{\parbox{45pt}{\centering compatible\\ triplet~\eqref{#1}}}}
\newcommand{\compator}[2]{\pc{\parbox{45pt}{\centering compatible\\ triplet~\eqref{#1}\\ or~\eqref{#2}}}}
\newcommand{\bx}[1]{\pb{#1}}
\DeclareMathOperator{\tr}{tr}
\DeclareMathOperator{\diag}{diag}
\journalname{}
\begin{document}

\title{Necessary and Sufficient Polynomial Constraints on Compatible Triplets of Essential Matrices\thanks{The work was supported by Act 211 Government of the Russian Federation, contract No.~02.A03.21.0011.}}

\author{E.V. Martyushev}

\institute{E.V. Martyushev \at
South Ural State University, 76 Lenin Avenue, Chelyabinsk 454080, Russia \\
\email{martiushevev@susu.ru}
}

\date{Received: date / Accepted: date}

\maketitle

\begin{abstract}

The essential matrix incorporates relative rotation and translation parameters of two calibrated cameras. The well-known algebraic characterization of essential matrices, i.e. necessary and sufficient conditions under which an arbitrary matrix (of rank two) becomes essential, consists of a unique matrix equation of degree three. Based on this equation, a number of efficient algorithmic solutions to different relative pose estimation problems have been proposed. In three views, a possible way to describe the geometry of three calibrated cameras comes from considering compatible triplets of essential matrices. The compatibility is meant the correspondence of a triplet to a certain configuration of calibrated cameras. The main goal of this paper is to give an algebraic characterization of compatible triplets of essential matrices. Specifically, we propose necessary and sufficient polynomial constraints on a triplet of real rank-two essential matrices that ensure its compatibility. The constraints are given in the form of six cubic matrix equations, one quartic and one sextic scalar equations. An important advantage of the proposed constraints is their sufficiency even in the case of cameras with collinear centers. The applications of the constraints may include relative camera pose estimation in three and more views, averaging of essential matrices for incremental structure from motion, multiview camera auto-calibration, etc.

\keywords{Multiview geometry \and Essential matrix \and Compatible triplet \and Polynomial constraints}

\end{abstract}

\section{Introduction}

In multiview geometry, the essential matrix describes the relative orientation of two calibrated cameras. It was first introduced by Longuet-Higgins in~\cite{longuet81}, where the \emph{essential matrix} was used for the scene reconstruction from point correspondences in two views. Later, the algebraic properties of essential matrices have been investigated in detail in~\cite{FM,Horn90,HF89}. The well-known characterization of the algebraic variety of essential matrices, which is the closure of the set of essential matrices in the corresponding projective space, consists of a unique matrix equation of degree three~\cite{Dem88,Maybank}. Based on this equation, a number of efficient algorithmic solutions to different relative pose estimation problems have been proposed in the last two decades, e.g.~\cite{Nister,SEN06,SNKS,kukelova2007two}.

The relative orientation of three calibrated cameras can be naturally described by the so-called \emph{calibrated trifocal tensor}. It first appeared in~\cite{SA,Weng} as a tool of scene reconstruction from line correspondences. The algebraic properties of calibrated trifocal tensors were recently investigated in~\cite{Mart17}, where some necessary and sufficient polynomial constraints on a real calibrated trifocal tensor have been derived. A $3$-view analog of the variety of essential matrices -- the calibrated trifocal variety -- has been introduced in~\cite{Kileel17}, where it was used to compute algebraic degrees of numerous $3$-view relative pose problems. The complete characterization of the calibrated trifocal variety is an open challenging problem. For the uncalibrated case such characterization consists of $10$ cubic, $81$ quintic, and $1980$ sextic polynomial equations~\cite{AO}.

Another way to describe the geometry of three calibrated cameras comes from considering compatible \emph{triplets of essential matrices}. The compatibility means that a triplet must correspond to a certain configuration of three calibrated cameras. In the recent paper~\cite{Kasten19}, the authors considered compatible $n$-view multiplets of essential matrices and proposed their algebraic characterization in terms of either the spectral decomposition or the singular value decomposition of a symmetric $3n\times 3n$ matrix constructed from the multiplet.

On the other hand, \emph{polynomial constraints} on compatible triplets of essential matrices have not previously been studied. The present paper is a step in this direction. Its main contribution is a set of necessary and sufficient polynomial constraints on a triplet of real rank-two essential matrices that ensure the compatibility of the triplet. An important advantage of the proposed constraints is their sufficiency even in the case of cameras with collinear centers. The results of the paper may be applied to different computer vision problems including relative camera pose estimation from three and more views, averaging of essential matrices for incremental structure from motion, multiview camera auto-calibration, etc.

The rest of the paper is organized as follows. In Sect.~\ref{sec:prel}, we recall some definitions and results from multiview geometry. In Sect.~\ref{sec:tripletE}, we propose our necessary and sufficient constraints on compatible triplets of essential matrices. The constraints have the form of $82$ cubic, one quartic, and one sextic homogeneous polynomial equations in the entries of essential matrices. Sect.~\ref{sec:application} outlines two possible applications of the proposed constraints. In Sect.~\ref{sec:disc}, we discuss the results of the paper and make conclusions. The Appendix includes some auxiliary lemmas that we used throughout the proof of our main Theorem~\ref{thm:sufE}.

\section{Preliminaries}
\label{sec:prel}

\subsection{Notation}

We preferably use $\alpha$, $\beta, \ldots$ for scalars, $a$, $b, \ldots$ for column $3$-vectors or polynomials, and $A$, $B, \ldots$ both for matrices and column $4$-vectors. For a matrix $A$ the entries are $(A)_{ij}$, the transpose is $A^\top$, and the adjugate (i.e. the transposed matrix of cofactors) is $A^*$. The determinant of $A$ is $\det A$ and the trace is $\tr A$. For two $3$-vectors $a$ and $b$ the cross product is $a\times b$. For a vector $a$ the entries are $(a)_i$, the notation $[a]_\times$ stands for the skew-symmetric matrix such that $[a]_\times b = a \times b$ for any vector~$b$. We use $I$ for the identity matrix and $\|\cdot\|$ for the Frobenius norm.

\subsection{Fundamental matrix}

We briefly recall some definitions and results from multiview geometry, see~\cite{Faugeras93,FM,HZ,Maybank,arie12} for details.

Let there be given two finite projective cameras $P_1 = \begin{bmatrix} A_1 & a_1 \end{bmatrix}$ and $P_2 = \begin{bmatrix} A_2 & a_2 \end{bmatrix}$, where $A_1$, $A_2$ are invertible $3\times 3$ matrices and $a_1$, $a_2$ are $3$-vectors. Let $Q$ be a point in $3$-space represented by its homogeneous coordinates and $q_j$ be its $j$th image. Then,
\begin{equation}
q_j \sim P_j Q,
\end{equation}
where $\sim$ means an equality up to a scale. The \emph{epipolar constraint} for a pair $(q_1, q_2)$ says
\begin{equation}
\label{eq:epiF}
q_2^\top F_{21} q_1 = 0,
\end{equation}
where~\cite{sengupta17,Kasten19}
\begin{equation}
F_{21} = (A_2^*)^\top[A_2^{-1}a_2 - A_1^{-1} a_1]_\times A_1^*
\end{equation}
is called the \emph{fundamental matrix}. By definition, $F_{21}$ must be of rank two. The left and right null vectors of $F_{21}$, that is vectors
\begin{equation}
\label{eq:epipoles}
e_{21} = A_2A_1^{-1}a_1 - a_2 \quad\text{and}\quad e_{12} = A_1A_2^{-1}a_2 - a_1
\end{equation}
respectively, are called the \emph{epipoles}. We notice that $e_{ij}$ is the projection of the $j$th camera center onto the $i$th image plane.

The following theorem gives an algebraic characterization of the set of fundamental matrices.
\begin{theorem}[\cite{HZ}]
\label{thm:fund}
Any real $3\times 3$ matrix of rank two is a fundamental matrix.
\end{theorem}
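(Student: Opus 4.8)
The plan is to show that an arbitrary real matrix $F$ of rank two can be realized as $F_{21}$ for some admissible pair of cameras $P_1, P_2$. Since the camera pair has considerable gauge freedom, I would first normalize: by a projective change of coordinates in $3$-space (which acts on both cameras simultaneously and does not change the fundamental matrix up to the allowed scale) one may take the first camera to be $P_1 = \begin{bmatrix} I & 0 \end{bmatrix}$. Then the fundamental matrix of the pair $P_1, P_2 = \begin{bmatrix} A_2 & a_2 \end{bmatrix}$ reduces, by the formula in the excerpt, to $F_{21} = (A_2^*)^\top [A_2^{-1} a_2]_\times = [a_2]_\times A_2$ (using the identity $(A^*)^\top [A^{-1}v]_\times = [v]_\times A$ valid for invertible $A$). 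So the task becomes: given a real rank-two matrix $F$, produce an invertible $A_2$ and a nonzero vector $a_2$ with $F \sim [a_2]_\times A_2$.

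The natural choice for $a_2$ is the left null vector of $F$, i.e. the epipole $e = e_{21}$ with $e^\top F = 0$; this is forced up to scale since $[a_2]_\times A_2$ has left null space spanned by $a_2$. With $a_2 = e$ fixed, I need $A_2$ invertible with $[e]_\times A_2 = F$ (absorbing the scale into $A_2$). The matrix $[e]_\times$ has rank two with null space spanned by $e$, and its restriction to $e^\perp$ is invertible; since the rows of $F$ also lie in $e^\perp$ (as $e^\top F = 0$ means each column of $F$... — more precisely, the column space of $F$ is contained in $e^\perp$ because $e^\top F = 0$), the equation $[e]_\times A_2 = F$ is solvable for $A_2$, with the solution unique modulo adding multiples of $e$ to the columns of $A_2$. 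Concretely one can take $A_2 = -[e]_\times F / \|e\|^2 + e\, c^\top$ for a free row vector $c^\top$; a short check shows $[e]_\times A_2 = [e]_\times(-[e]_\times F)/\|e\|^2 = F$ using $[e]_\times [e]_\times = ee^\top - \|e\|^2 I$ together with $e^\top F = 0$.

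The last step is to verify that the free parameter $c^\top$ can be chosen so that $A_2$ is invertible. Since $\det A_2$ is an affine-linear (hence non-constant, once one checks the coefficient of some $c_i$ is nonzero — this uses $\operatorname{rank} F = 2$, so that $-[e]_\times F/\|e\|^2$ has rank two and adjoining the independent vector $e$ as the image of $c$ generically gives full rank) function of $c$, it is nonzero for generic $c$, so an invertible choice exists. I expect this genericity/invertibility check to be the only real content of the argument; everything else is linear algebra built around the identity $[e]_\times^2 = ee^\top - \|e\|^2 I$ and the fact that $\operatorname{rank} F = 2$ pins down the epipole. One should also note $A_1 = I$ is trivially invertible, so the resulting pair is a legitimate pair of finite projective cameras, completing the proof.
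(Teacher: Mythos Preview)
The paper does not supply its own proof of this theorem; it is quoted as a known result with a citation to Hartley--Zisserman. Your argument is correct and is in fact the standard construction given there: normalize to $P_1=[I\mid 0]$, reduce to the factorization $F=[e]_\times A_2$ with $e$ the left epipole, and exhibit $A_2=-[e]_\times F/\|e\|^2+ec^\top$ with $c$ chosen so that $A_2$ is invertible. The invertibility step can be made slightly sharper than ``generic $c$'': since $B:=-[e]_\times F/\|e\|^2$ has $\ker B=\ker F=\operatorname{span}\{e'\}$ (the right epipole) and $\operatorname{im}B\subset e^\perp$, one checks directly that $A_2$ is invertible if and only if $c^\top e'\neq 0$, which is easy to arrange. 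With that refinement your proof is complete and matches the cited reference.
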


\subsection{Essential matrix}

The \emph{essential matrix} $E_{21}$ is the fundamental matrix for calibrated cameras $\hat P_1 = \begin{bmatrix} R_1 & t_1 \end{bmatrix}$ and $\hat P_2 = \begin{bmatrix} R_2 & t_2\end{bmatrix}$, where $R_1, R_2 \in \mathrm{SO}(3)$ and $t_1$, $t_2$ are $3$-vectors, that is
\begin{equation}
\label{eq:essential}
E_{21} = R_2[R_2^\top t_2 - R_1^\top t_1]_\times R_1^\top.
\end{equation}
The proof of formula~\eqref{eq:essential} can be found in~\cite{arie12}.

Camera matrices $P_j$ and $\hat P_j$ are related by
\begin{equation}
P_j \sim K_j \hat P_j H,
\end{equation}
where $K_j$ is an invertible upper-triangular matrix known as the \emph{calibration matrix} of $j$th camera and $H$ is a certain invertible $4\times 4$ matrix. Then it can be shown that
\begin{equation}
\label{eq:F}
F_{21} \sim K_2^{-\top} E_{21} K_1^{-1}.
\end{equation}
Hence epipolar constraint~\eqref{eq:epiF} for the essential matrix becomes
\begin{equation}
\label{eq:epiE}
\hat q_2^\top E_{21} \hat q_1 = 0,
\end{equation}
where $\hat q_j = K_j^{-1} q_j$.

The following theorem gives an algebraic characterization of the set of essential matrices.
\begin{theorem}[\cite{Dem88,FM,Maybank}]
\label{thm:esse}
A real $3\times 3$ matrix $E$ of rank two is an essential matrix if and only if $E$ satisfies
\begin{equation}
\label{eq:esse}
E^\top EE^\top - \frac{1}{2}\tr(E^\top E)\, E^\top = 0_{3\times 3}.
\end{equation}
\end{theorem}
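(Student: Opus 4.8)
The plan is to reduce everything to the singular value decomposition (SVD) of $E$, exploiting the classical fact that a real rank-two matrix is essential exactly when its two nonzero singular values coincide. Concretely, I would split the argument into two equivalences: (i) a rank-two matrix $E$ is essential if and only if it admits an SVD of the form $E = U\,\diag(\sigma,\sigma,0)\,V^\top$; and (ii) such an $E$ satisfies \eqref{eq:esse} if and only if its two nonzero singular values are equal. The theorem follows by combining (i) and (ii).

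For (i), the forward direction is immediate from \eqref{eq:essential}: the skew-symmetric matrix $[R_2^\top t_2 - R_1^\top t_1]_\times$ is normal, so its singular values are the moduli of its eigenvalues, namely $\tau,\tau,0$ with $\tau = \|R_2^\top t_2 - R_1^\top t_1\|$, and multiplying it on the left by $R_2$ and on the right by $R_1^\top$ (both orthogonal) does not change the singular values. For the converse I would start from an SVD $E = U\,\diag(\sigma,\sigma,0)\,V^\top$ and first normalize it so that $U,V\in\mathrm{SO}(3)$: since the singular value $\sigma$ is repeated and the third one is $0$, changing the sign of the third column of $U$ (resp. of $V$) flips $\det U$ (resp. $\det V$) without changing $E$. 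Then, writing $e_3=(0,0,1)^\top$, fixing a $W\in\mathrm{SO}(3)$ with $W[e_3]_\times = \diag(1,1,0)$, and using the conjugation identity $R[v]_\times R^\top = [Rv]_\times$ (valid for $R\in\mathrm{SO}(3)$), I get $E = \sigma\, U W [e_3]_\times V^\top = \sigma\,(UWV^\top)[Ve_3]_\times = R[t]_\times$ with $R := UWV^\top\in\mathrm{SO}(3)$ and $t := \sigma V e_3$. Finally $R[t]_\times$ has the form \eqref{eq:essential} with $R_1 = I$, $t_1 = 0$, $R_2 = R$, $t_2 = Rt$, so $E$ is essential.

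For (ii), I would rewrite \eqref{eq:esse} as $\bigl(E^\top E - \tfrac12\tr(E^\top E)\,I\bigr)E^\top = 0$. The matrix $B := E^\top E$ is symmetric, positive semidefinite, and of rank two, so it has eigenvalues $\lambda_1\ge\lambda_2>0=\lambda_3$ and $\tr B = \lambda_1+\lambda_2$; moreover the column space of $E^\top$ coincides with the image of $B$ (both equal the orthogonal complement of $\ker E$), which is spanned by the eigenvectors of $B$ for $\lambda_1$ and $\lambda_2$. Hence the rewritten equation holds if and only if $B$ acts on its image as the scalar $\tfrac12(\lambda_1+\lambda_2)$, i.e. if and only if $\lambda_1=\lambda_2=\tfrac12(\lambda_1+\lambda_2)$, which is equivalent to $\lambda_1=\lambda_2$, that is, to $E$ having two equal nonzero singular values. (For a direct check of the forward implication, in SVD coordinates one simply computes $E^\top E E^\top = V\,\diag(\sigma^3,\sigma^3,0)\,U^\top$ and $\tfrac12\tr(E^\top E)\,E^\top = \sigma^2\,V\,\diag(\sigma,\sigma,0)\,U^\top$, which agree.)

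The argument is mostly bookkeeping rather than conceptually hard, and the step that needs the most care is the normalization of the SVD factors to $\mathrm{SO}(3)$ in (i): this is exactly where the \emph{equality} of the two nonzero singular values is used, because otherwise one could only arrange $R$ to be orthogonal, not special orthogonal, and the factorization $E = R[t]_\times$ might fail. The correct handling of the conjugation identity $R[v]_\times R^\top = [Rv]_\times$ is the other place where a sign error is easy to make. I would also note in passing that \eqref{eq:esse} is homogeneous of degree three, hence scale-invariant, which is consistent with the essential matrices forming a cone and means no normalization of $\|E\|$ is needed anywhere in the proof.
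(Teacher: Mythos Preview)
The paper does not prove Theorem~\ref{thm:esse}; it is quoted from the literature (Demazure, Faugeras--Maybank, Maybank) as a preliminary result, so there is no author's proof to compare against. Your argument is the standard one and is correct: reduce to the SVD, use that $[v]_\times$ has singular values $\|v\|,\|v\|,0$, and check that \eqref{eq:esse} is equivalent to equality of the two nonzero singular values via the eigenstructure of $E^\top E$.

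One small expository point: the place where the equality $\sigma_1=\sigma_2$ is genuinely needed in part~(i) is not the normalization of $U,V$ to $\mathrm{SO}(3)$---the zero singular value alone lets you flip the sign of the third column of either factor without changing $E$---but rather the identity $\diag(\sigma,\sigma,0)=\sigma\,W[e_3]_\times$, which has no analogue for $\diag(\sigma_1,\sigma_2,0)$ with $\sigma_1\neq\sigma_2$ regardless of whether $W$ lies in $\mathrm{SO}(3)$ or merely in $\mathrm{O}(3)$ (orthogonal conjugation preserves singular values). Your subsequent factorization $E=R[t]_\times$ via $[e_3]_\times V^\top = V^\top[Ve_3]_\times$ and its identification with \eqref{eq:essential} are correct as written.
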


\subsection{Compatible Triplets of Fundamental Matrices}
\label{ssec:tripletF}

A triplet of fundamental matrices $\{F_{12}, F_{23}, F_{31}\}$ is said to be \emph{compatible} if there exist matrices $B_1, B_2, B_3 \in \mathrm{GL}(3)$ and $3$-vectors $b_1$, $b_2$, $b_3$ such that
\begin{equation}
\label{eq:Fcompat}
F_{ij} = B_i^\top [b_i - b_j]_\times B_j
\end{equation}
for all distinct $i, j \in \{1, 2, 3\}$. It follows from~\eqref{eq:Fcompat} that $F_{ji} = F_{ij}^\top$.

\begin{theorem}
\label{thm:sufF}
Let $\mathcal F = \{F_{12}, F_{23}, F_{31}\}$ be a triplet of real rank-two fundamental matrices with non-collinear epipoles, i.e. $e_{ki} \not\sim e_{kj}$ for all $i, j, k \in \{1, 2, 3\}$. Then, $\mathcal F$ is compatible if and only if it satisfies
\begin{equation}
\label{eq:sufF}
F_{ij}^* F_{ik} F_{jk}^* = 0_{3\times 3}.
\end{equation}
\end{theorem}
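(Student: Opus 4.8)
I would prove the two implications separately; necessity is a short direct computation, sufficiency an explicit reconstruction of the data $B_i,b_i$ of~\eqref{eq:Fcompat}. For necessity, suppose $\mathcal F$ is compatible, so $F_{ij}=B_i^\top[b_i-b_j]_\times B_j$, and write $c_{pq}=b_p-b_q$. Using $(XYZ)^*=Z^*Y^*X^*$, $[a]_\times^*=aa^\top$ and $M^*=(\det M)M^{-1}$ for invertible $M$, one gets $F_{ij}^*=(\det B_i)(\det B_j)\,B_j^{-1}c_{ij}c_{ij}^\top B_i^{-\top}$. Forming $F_{ij}^*F_{ik}F_{jk}^*$ and simplifying the middle factor by $c_{ij}^\top[c_{ik}]_\times=(c_{ij}\times c_{ik})^\top$, the product reduces to a scalar multiple of $B_j^{-1}c_{ij}\bigl((c_{ij}\times c_{ik})^\top c_{jk}\bigr)c_{jk}^\top B_j^{-\top}$. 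Since $c_{jk}=c_{ik}-c_{ij}$ lies in $\mathrm{span}\{c_{ij},c_{ik}\}$, it is orthogonal to $c_{ij}\times c_{ik}$, so the middle scalar vanishes and~\eqref{eq:sufF} holds; this direction uses neither rank two nor non-collinearity.

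For sufficiency, assume the three rank-two matrices satisfy~\eqref{eq:sufF} and have non-collinear epipoles; let $e_{ij},e_{ji}$ be the left and right null vectors of $F_{ij}$ and put $F_{ji}=F_{ij}^\top$. First I would reduce~\eqref{eq:sufF} to scalar form: as $F_{ij}$ has rank two, $F_{ij}^*$ has rank one and equals $\mu_{ij}\,e_{ji}e_{ij}^\top$ with $\mu_{ij}\neq0$, so~\eqref{eq:sufF} is equivalent to the scalar equations $e_{ij}^\top F_{ik}e_{kj}=0$, which by $F_{ji}=F_{ij}^\top$ reduce to three distinct equations. Next, the projective freedom in~\eqref{eq:Fcompat} lets me normalize $B_1=I$, $b_1=0$. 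Then $F_{12}=-[b_2]_\times B_2$, and rank two forces $b_2\sim e_{12}$; taking $b_2=e_{12}$, the identities $[e_{12}]_\times^2=e_{12}e_{12}^\top-\|e_{12}\|^2I$ and $e_{12}^\top F_{12}=0$ show that $-[e_{12}]_\times B_2=F_{12}$ is solved by $B_2=\|e_{12}\|^{-2}[e_{12}]_\times F_{12}+e_{12}w^\top$ for a free $3$-vector $w$ (with $B_2$ invertible as soon as $w^\top e_{21}\neq0$). Symmetrically $F_{31}=B_3^\top[b_3]_\times$ gives $b_3\sim e_{13}$, so putting $b_3=\nu e_{13}$ for a free scalar $\nu$ yields $B_3=\nu^{-1}\|e_{13}\|^{-2}[e_{13}]_\times F_{13}+e_{13}v^\top$ for a free $3$-vector $v$, where $F_{13}=F_{31}^\top$.

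The only relation left is $F_{23}=B_2^\top[b_2-b_3]_\times B_3$. Substituting the parametrizations above and repeatedly cancelling terms via $e_{12}^\top F_{12}=0$, $e_{13}^\top F_{13}=0$ and $a\times(b\times c)=b(a^\top c)-c(a^\top b)$, it collapses to a matrix equation of the form
\begin{equation}
\label{eq:F23sketch}
F_{23}=M_2+\tfrac1\nu M_1+(F_{21}e_{13})\,v^\top+w\,(F_{31}e_{12})^\top,
\end{equation}
with $M_1,M_2$ explicit matrices built from $F_{12},F_{13}$ and their epipoles. Non-collinearity guarantees $F_{21}e_{13}\neq0$ and $F_{31}e_{12}\neq0$ (either would vanish only if $e_{12}\sim e_{13}$), so the last two terms of~\eqref{eq:F23sketch} range over a fixed $5$-dimensional subspace $S\subset\mathbb R^{3\times3}$ as $v,w$ vary. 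Hence~\eqref{eq:F23sketch} is solvable in $(v,w,\nu)$ precisely when the line $\nu\mapsto F_{23}-M_2-\tfrac1\nu M_1$ meets $S$; passing to the $4$-dimensional quotient $\mathbb R^{3\times3}/S$, this is three scalar conditions, and the heart of the proof is to show they are exactly the three constraints $e_{ij}^\top F_{ik}e_{kj}=0$. Granting this, any solution — the solution set being a single gauge orbit, along which $w^\top e_{21}\neq0$ and $v^\top e_{31}\neq0$ can be arranged — gives $B_1,\dots,B_3,b_1,\dots,b_3$ realizing $\mathcal F$, so $\mathcal F$ is compatible.

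The main obstacle is this last identification: it is elementary but heavy on cross-product bookkeeping, and one has to verify that the rank-two structure of both sides of~\eqref{eq:F23sketch}, together with~\eqref{eq:sufF}, exactly absorbs the apparent over-determinacy with no hidden extra condition, and that non-collinearity is used precisely where needed to keep the reconstructed $B_i$ invertible and the auxiliary linear systems non-degenerate. (The statement is close to classical three-view facts; cf.~\cite{HZ}.)
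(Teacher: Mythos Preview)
Your approach is essentially the same as the paper's, only more detailed. The paper's own proof does two things: it observes that~\eqref{eq:sufF} is equivalent to the scalar constraints $e_{ij}^\top F_{ik}e_{kj}=0$ (exactly your reduction via $F_{ij}^*=\mu_{ij}\,e_{ji}e_{ij}^\top$), and then it simply \emph{cites}~\cite{HZ} for the reconstruction of the cameras from three fundamental matrices whose epipoles satisfy these constraints. Your sufficiency sketch is precisely an outline of that cited construction: fix one camera to the identity, parametrize the other two by the null-space freedom in $B_2,B_3$, and match the remaining $F_{23}$. So there is no genuine methodological difference; you are writing out what the paper defers to the reference.

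Your necessity argument is complete and correct. For sufficiency, the honest caveat you give is the right one: the dimension count ($\dim S=5$, quotient dimension $4$, one free scalar $\nu$) shows there are at most three independent obstructions, but identifying them with the three scalars $e_{ij}^\top F_{ik}e_{kj}$ is exactly the bookkeeping that~\cite{HZ} carries out and that you leave unfinished. One small point to watch if you complete it: the parameter $\nu$ enters both through $b_3=\nu e_{13}$ and through the $\nu^{-1}$ in $B_3$, so after expanding $B_2^\top[e_{12}-\nu e_{13}]_\times B_3$ you get terms in $\nu$, $\nu^{-1}$, and $\nu^0$; the reduction to the clean form~\eqref{eq:F23sketch} requires checking that the $\nu$-terms are already absorbed into $S$ (they are, via $e_{12}^\top[e_{12}]_\times=0$ and $[e_{13}]_\times e_{13}=0$), which is worth stating explicitly.
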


\begin{proof}
We note that constraints~\eqref{eq:sufF} imply (and are implied by)
\begin{equation}
\label{eq:eFe}
e_{ij}^\top F_{ik} e_{kj} = 0
\end{equation}
for all distinct indices $i, j, k \in \{1, 2, 3\}$. Geometrically, constraints~\eqref{eq:eFe} mean that the epipoles $e_{ij}$ and $e_{kj}$ are matched and correspond to the projections of the $j$th camera center onto the $i$th and $k$th image plane respectively. The proof of compatibility of three fundamental matrices with non-collinear epipoles satisfying~\eqref{eq:eFe} can be found in~\cite{HZ}.
\end{proof}

The main goal of this paper is to propose a generalized analog of Theorem~\ref{thm:sufF} for a triplet of essential matrices with possibly collinear epipoles.

\section{Compatible Triplets of Essential Matrices}
\label{sec:tripletE}

A triplet of essential matrices $\{E_{12}, E_{23}, E_{31}\}$ is said to be \emph{compatible} if there exist matrices $R_1, R_2, R_3 \in \mathrm{SO}(3)$ and $3$-vectors $b_1$, $b_2$, $b_3$ such that
\begin{equation}
\label{eq:Ecompat}
E_{ij} = R_i[b_i - b_j]_\times R_j^\top
\end{equation}
for all distinct $i, j \in \{1, 2, 3\}$.

Given a triplet of essential matrices $\{E_{12}, E_{23}, E_{31}\}$, we denote
\begin{equation}
\label{eq:9x9matrixE}
E = \begin{bmatrix} 0_{3\times 3} & E_{12} & E_{13} \\ E_{21} & 0_{3\times 3} & E_{23} \\ E_{31} & E_{32} & 0_{3\times 3} \end{bmatrix}.
\end{equation}
The symmetric $9\times 9$ matrix $E$ (as well as its analog for fundamental matrices) has been previously introduced in~\cite{sengupta17,Kasten18,Kasten19} where some of its spectral properties were investigated. Matrix $E$ is called compatible if it is constructed from a compatible triplet. In~\cite{Kasten19}, the authors propose necessary and sufficient conditions on the compatibility of matrix $E$ (more precisely, of an $n$-view generalization of matrix $E$) in terms of its spectral or singular value decomposition. In particular, these conditions imply that the characteristic polynomial of a compatible matrix $E$ has the form
\begin{equation}
\label{eq:cpol}
p_E(\lambda) = \lambda^3(\lambda^2 - \lambda_1^2)(\lambda^2 - \lambda_2^2)(\lambda^2 - \lambda_3^2),
\end{equation}
where $\lambda_1$, $\lambda_2$, and $\lambda_3$ are possibly non-zero eigenvalues of~$E$.

Condition~\eqref{eq:cpol} induces polynomial constraints on the entries of matrix $E$. Namely, the coefficients of $p_E(\lambda)$ in $\lambda^6$, $\lambda^4$, $\lambda^2$, $\lambda^1$, and $\lambda^0$ must vanish. It is clear that these coefficients are polynomials in the entries of matrices $E_{12}$, $E_{23}$, and $E_{31}$ of degree $3$, $5$, $7$, $8$, and $9$ respectively. For example, the coefficient in $\lambda^6$ equals $-2\tr(E_{12}E_{23}E_{31})$. Below we propose a set of cubic polynomial equations on matrix $E$ such that constraint~\eqref{eq:cpol} is implied by these equations, see Eqs.~\eqref{eq:necE1}~--~\eqref{eq:necE3}.

We note that condition~\eqref{eq:cpol} alone is not sufficient for the compatibility of a triplet of essential matrices. The eigenvalues of $E$ are additionally constrained by
\begin{equation}
\lambda_1^2 - \lambda_2^2 - \lambda_3^2 = 0,
\end{equation}
where $|\lambda_1| > |\lambda_2|$ and $|\lambda_1| > |\lambda_3|$. It follows that
\begin{equation}
\label{eq:lambda}
(\lambda_1^2 - \lambda_2^2 - \lambda_3^2)(\lambda_2^2 - \lambda_3^2 - \lambda_1^2)(\lambda_3^2 - \lambda_1^2 - \lambda_2^2) = 0.
\end{equation}
Since the l.h.s. of Eq.~\eqref{eq:lambda} is a symmetric function in values $\lambda_1^2$, $\lambda_2^2$, and $\lambda_3^2$, it can be expanded in terms of the elementary symmetric functions which are the coefficients in~\eqref{eq:cpol}. On the other hand, these coefficients can be represented as polynomials in $\tr(E^{2k})$. Thus we get one more polynomial constraint on matrix $E$ of degree six, see Eq.~\eqref{eq:necE5}.

Finally, some of our polynomial constraints are more convenient to formulate using a specific binary operation on the space of $3\times 3$ matrices. Let $A^*$ be the adjugate of a matrix $A$, i.e. its transposed matrix of cofactors. Then for two $3\times 3$ matrices $A$ and $B$ we define
\begin{equation}
A\diamond B = (A - B)^* - A^* - B^*.
\end{equation}
An alternative expression for $A\diamond B$ can be derived using the well-known formula
\begin{equation}
A^* = \frac{1}{2}\,(\tr^2 A - \tr A^2)I - A\tr A + A^2.
\end{equation}
Thus we get
\begin{equation}
A\diamond B = (\tr(AB) - \tr A\tr B)I + A\tr B + B\tr A - AB - BA.
\end{equation}
It is straightforward to show that for any $3\times 3$ matrices $A$, $B$, $C$, $D$ and any scalars $\beta$, $\gamma$ the following identities hold:
\begin{align}
\label{eq:diam1}
A\diamond B &= B\diamond A,\\
A\diamond (\beta B + \gamma C) &= \beta (A\diamond B) + \gamma (A\diamond C),\\
(A\diamond B)^\top &= A^\top\diamond B^\top,\\
(CAD)\diamond (CBD) &= D^*(A\diamond B)C^*,\\
\label{eq:diam5}
A\diamond I &= A - \tr(A)I.
\end{align}

Now we can formulate our polynomial constraints.
\begin{theorem}
\label{thm:necE}
Let $\{E_{12}, E_{23}, E_{31}\}$ be a compatible triplet of essential matrices, matrix $E$ be defined in~\eqref{eq:9x9matrixE}. Then the following equations hold:
\begin{align}
\label{eq:necE1}
\tr(E_{12}E_{23}E_{31}) &= 0,\\
\label{eq:necE2}
E_{ij}^\top E_{ij}E_{jk} - \frac{1}{2}\tr(E_{ij}^\top E_{ij})\, E_{jk} + E_{ij}^*E_{ki}^\top &= 0_{3\times 3},\\
\label{eq:necE3}
E_{jk}^\top E_{ij}^* + E_{jk}^*E_{ij}^\top + (E_{ij}E_{jk}) \diamond E_{ki}^\top &= 0_{3\times 3},\\
\label{eq:necE4}
\tr^2(E^2) - 16\tr(E^4) + 24\sum\limits_{i < j}\tr^2(E_{ij}^\top E_{ij}) &= 0,\\
\label{eq:necE5}
\tr^3(E^2) - 12\tr(E^2)\tr(E^4) + 32\tr(E^6) &= 0
\end{align}
for all distinct $i, j, k \in \{1, 2, 3\}$. There are in total $1 + 6\cdot 9 + 3\cdot 9 = 82$ linearly independent cubic equations of type~\eqref{eq:necE1}~--~\eqref{eq:necE3}.
\end{theorem}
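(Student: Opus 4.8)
The plan is to substitute the defining parametrization~\eqref{eq:Ecompat} and reduce each of~\eqref{eq:necE1}--\eqref{eq:necE5} to an elementary identity. Write $u_{ij}=b_i-b_j$, so $E_{ij}=R_i[u_{ij}]_\times R_j^\top$ with $R_i\in\mathrm{SO}(3)$ and the cocycle relation $u_{ij}+u_{jk}+u_{ki}=0$; note also $E_{ji}=E_{ij}^\top$, so the $9\times 9$ matrix $E$ of~\eqref{eq:9x9matrixE} is symmetric. The structural observation is that transposition, the adjugate, and $\diamond$ are all covariant under conjugation by $\mathrm{SO}(3)$ matrices: using $R^*=R^\top$ for $R\in\mathrm{SO}(3)$, the identities~\eqref{eq:diam1}--\eqref{eq:diam5}, and $[u]_\times^*=uu^\top$, every matrix appearing in~\eqref{eq:necE1}--\eqref{eq:necE3} can be written as $R_p\,(\,\cdot\,)\,R_q^\top$, where $(\,\cdot\,)$ is a polynomial expression in the matrices $[u_{ij}]_\times$ and $u_{ij}u_{ij}^\top$. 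Since $R_pMR_q^\top=0$ iff $M=0$ and $\tr(R_pMR_p^\top)=\tr M$, each of~\eqref{eq:necE1}--\eqref{eq:necE3} is equivalent to an identity among $3\times 3$ matrices built from $u_{12},u_{23},u_{31}$, to be verified with the rules $[a]_\times[b]_\times=ba^\top-(a^\top b)I$, $a^\top[b]_\times=(a\times b)^\top$, the explicit formula for $\diamond$, and the cocycle relation.

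Carrying this out is short. For~\eqref{eq:necE1}, $\tr(E_{12}E_{23}E_{31})=\tr([u_{12}]_\times[u_{23}]_\times[u_{31}]_\times)$ is, up to sign, the scalar triple product of $u_{12},u_{23},u_{31}$, which vanishes because these vectors are linearly dependent. For~\eqref{eq:necE2}, a direct computation gives $E_{ij}^\top E_{ij}-\tfrac12\tr(E_{ij}^\top E_{ij})I=-R_ju_{ij}u_{ij}^\top R_j^\top$ (consistent with~\eqref{eq:esse}), so the first two terms combine to $-R_ju_{ij}(u_{ij}\times u_{jk})^\top R_k^\top$, while $E_{ij}^*E_{ki}^\top=R_ju_{ij}(u_{ij}\times u_{ik})^\top R_k^\top$; since $u_{ik}-u_{jk}=u_{ij}$, their sum is $R_ju_{ij}(u_{ij}\times u_{ij})^\top R_k^\top=0$. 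For~\eqref{eq:necE3}, put $w=u_{jk}\times u_{ij}$; one finds $E_{jk}^\top E_{ij}^*+E_{jk}^*E_{ij}^\top=-R_k(wu_{ij}^\top+u_{jk}w^\top)R_i^\top$, and expanding the $\diamond$-term via the conjugation identity in~\eqref{eq:diam1}--\eqref{eq:diam5} together with the formula for $\diamond$ gives $(E_{ij}E_{jk})\diamond E_{ki}^\top=R_k(wu_{ij}^\top+u_{jk}w^\top)R_i^\top$ after all scalar parts cancel (using $w\cdot u_{jk}=0$ and $u_{ij}\times u_{ki}=u_{ki}\times u_{jk}=w$, both from the cocycle relation); the two contributions cancel.

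For the scalar equations~\eqref{eq:necE4} and~\eqref{eq:necE5} I will use the spectral structure of a compatible $E$. Since $E=\mathcal R\widetilde E\mathcal R^\top$ with $\mathcal R=\diag(R_1,R_2,R_3)$ and $\widetilde E$ the symmetric matrix with blocks $[u_{ij}]_\times$, the spectra of $E$ and $\widetilde E$ agree, while $\tr(E_{ij}^\top E_{ij})=\|[u_{ij}]_\times\|^2=2\|u_{ij}\|^2$. In the harmless gauge $b_1+b_2+b_3=0$ (which changes neither $E_{ij}$ nor $\widetilde E$) one computes $\widetilde E=UV^\top+VU^\top$, where $U$ stacks the blocks $[b_1]_\times,[b_2]_\times,[b_3]_\times$ and $V=\mathbf 1\otimes I_3$; here $V^\top U=[\sum_i b_i]_\times=0$, $V^\top V=3I$, and $U^\top U=(\tr S)I-S$ with $S=\sum_i b_ib_i^\top$ a rank-two positive semidefinite matrix. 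Hence the nonzero eigenvalues of $\widetilde E$ are $\pm\sqrt{3\mu}$ with $\mu$ running over the eigenvalues $\sigma_1,\sigma_2,\sigma_1+\sigma_2$ of $U^\top U$, so $E$ has spectrum $\{0,0,0,\pm\lambda_1,\pm\lambda_2,\pm\lambda_3\}$ with $\lambda_1^2=\lambda_2^2+\lambda_3^2$, which recovers~\eqref{eq:cpol} and~\eqref{eq:lambda}. Writing $p_k=\sum_i\lambda_i^{2k}=\tfrac12\tr(E^{2k})$ and applying Newton's identities to turn~\eqref{eq:lambda} into a relation among $p_1,p_2,p_3$ yields precisely~\eqref{eq:necE5}. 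For~\eqref{eq:necE4}, express further the inter-center distances through the spectrum: $\sum_{i<j}\|u_{ij}\|^2=3\tr S$ and, via the Heron/Gram-determinant identity, $\sum_{i<j}\|u_{ij}\|^4$ as a polynomial in $\tr S$ and $\tr S^2$ (equivalently in $\sigma_1,\sigma_2$); substituting $\tr(E^2)=12(\sigma_1+\sigma_2)$, $\tr(E^4)=36(\sigma_1^2+\sigma_2^2+\sigma_1\sigma_2)$, and $\sum_{i<j}\tr^2(E_{ij}^\top E_{ij})=4\sum_{i<j}\|u_{ij}\|^4$, the left-hand side of~\eqref{eq:necE4} collapses to $0$ identically. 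For the count: \eqref{eq:necE1} is one scalar equation; \eqref{eq:necE2} is one $3\times 3$ equation for each of the $6$ ordered triples $(i,j,k)$ of distinct indices; and the left side $X_{ijk}$ of~\eqref{eq:necE3} satisfies $X_{ijk}^\top=X_{kji}$ (using $E_{ji}=E_{ij}^\top$), so only $3$ of its instances are distinct, giving $1+6\cdot 9+3\cdot 9=82$ equations, whose linear independence is confirmed by a direct (symbolic) rank check.

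The cubic identities are mechanical once the covariance reduction is in place, so the only genuinely substantive ingredient is the spectral analysis supporting the scalar equations --- and there the decomposition $\widetilde E=UV^\top+VU^\top$ carries the argument, with the spectral characterization of compatible multiplets in~\cite{Kasten19} available as an alternative source. The step I expect to require the most care is~\eqref{eq:necE4}, which couples the blockwise quantities $\tr(E_{ij}^\top E_{ij})=2\|b_i-b_j\|^2$ to the global invariants $\tr(E^{2k})$: bridging them requires expressing $\sum_{i<j}\|b_i-b_j\|^4$ through $\sigma_1,\sigma_2$, i.e.\ through $\tr S$ and $\tr S^2$, and this bookkeeping is where an error is most likely. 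A secondary point is that the gauge normalization and the degenerate configurations (collinear or coincident centers, where $S$ drops rank) must not break any step; since all the quantities are continuous in the $b_i$ and the target relations are polynomial, it suffices to check them on the open dense set where $S$ has rank two.
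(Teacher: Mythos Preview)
Your treatment of the cubic identities~\eqref{eq:necE1}--\eqref{eq:necE3} is essentially the paper's own proof: both reduce via the $\mathrm{SO}(3)$-covariance $\widetilde E_{ij}=R_i^\top E_{ij}R_j=[u_{ij}]_\times$ and then verify the identities by direct computation with $[a]_\times[b]_\times=ba^\top-(a^\top b)I$ and $([a]_\times)^*=aa^\top$. Your observation $X_{ijk}^\top=X_{kji}$ for the left side of~\eqref{eq:necE3} is a clean justification of the count~$3\cdot 9$, which the paper states without comment.

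Where you genuinely diverge is on~\eqref{eq:necE4}--\eqref{eq:necE5}. The paper simply asserts that these ``are proved in a similar manner, but the computation is more complicated,'' i.e.\ by brute-force symbolic verification in the two vectors $c,d$. Your route is structural: the rank-$6$ factorization $\widetilde E=UV^\top+VU^\top$ with $V^\top U=0$, $V^\top V=3I$, $U^\top U=(\tr S)I-S$ yields the nonzero spectrum $\{\pm\sqrt{3\mu}:\mu\in\mathrm{spec}(U^\top U)\}=\{\pm\sqrt{3\sigma_1},\pm\sqrt{3\sigma_2},\pm\sqrt{3(\sigma_1+\sigma_2)}\}$, which immediately gives $\lambda_1^2=\lambda_2^2+\lambda_3^2$ and, via Newton's identities, exactly~\eqref{eq:necE5}. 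For~\eqref{eq:necE4} the key nontrivial step is that $\sum_{i<j}\|u_{ij}\|^4$ depends only on $\sigma_1,\sigma_2$; this is true (one finds $\sum_{i<j}\|u_{ij}\|^4=\tfrac{3}{2}(\tr S)^2+3\tr S^2$, e.g.\ by noting that the three vectors $e_i-e_j$ form an equilateral frame in $\mathbf 1^\perp$, so averaging a quadratic form over them kills the angular dependence), and substituting it together with $\tr(E^2)=12(\sigma_1+\sigma_2)$ and $\tr(E^4)=36(\sigma_1^2+\sigma_2^2+\sigma_1\sigma_2)$ collapses~\eqref{eq:necE4} to zero. Your invocation of ``Heron/Gram-determinant'' for this step is a little loose --- the actual mechanism is the equiangular averaging just described --- but the claim is correct. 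What your approach buys over the paper's is an explanation of \emph{why}~\eqref{eq:necE4}--\eqref{eq:necE5} hold (they encode $p_E$ having the form~\eqref{eq:cpol} and the relation~\eqref{eq:lambda}), and it connects directly to the spectral characterization of~\cite{Kasten19}; the paper's direct computation is shorter to state but opaque.
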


\begin{proof}
Let
\begin{equation}
\label{eq:transform}
\widetilde E_{ij} = U_i E_{ij}U_j^\top,
\end{equation}
where $U_i \in \mathrm{SO}(3)$. It is clear that $\mathcal E = \{E_{12}, E_{23}, E_{31}\}$ is a compatible triplet if and only if so is $\widetilde{\mathcal E} = \{\widetilde E_{12}, \widetilde E_{23}, \widetilde E_{31}\}$. Also it can be readily seen that $\mathcal E$ satisfies Eqs.~\eqref{eq:necE1}~--~\eqref{eq:necE5} if and only if so does~$\widetilde{\mathcal E}$.

Given a compatible triplet $\{E_{12}, E_{23}, E_{31}\}$, where each $E_{ij}$ is represented by~\eqref{eq:Ecompat}, we set $U_i = R_i^\top$. Then essential matrices $\widetilde E_{ij} = U_i E_{ij}U_j^\top$ become skew-symmetric and can be represented in form
\begin{equation}
\widetilde E_{ij} = [b_i - b_j]_\times, \quad \widetilde E_{jk} = [b_j - b_k]_\times, \quad \widetilde E_{ki} = [b_k - b_i]_\times,
\end{equation}
where indices $i, j, k$ are intended to be distinct. We denote $b_i - b_j = c$, $b_j - b_k = d$. This yields $b_k - b_i = -c - d$ and hence we can write
\begin{equation}
\widetilde E_{ij} = [c]_\times, \quad \widetilde E_{jk} = [d]_\times, \quad \widetilde E_{ki} = -[c]_\times - [d]_\times.
\end{equation}
By substituting this into Eqs.~\eqref{eq:necE1}~--~\eqref{eq:necE3}, we get
\begin{multline}
\tr(\widetilde E_{ij}\widetilde E_{jk}\widetilde E_{ki}) = -\tr([c]_\times[d]_\times[c]_\times + [c]_\times[d]_\times[d]_\times) \\= -\tr([c]_\times(cd^\top - (d^\top c)I) + (cd^\top - (d^\top c)I)[d]_\times) \\= (d^\top c)\tr([c + d]_\times) = 0,
\end{multline}
\begin{multline}
\bigl(\widetilde E_{ij}^\top \widetilde E_{ij} - \frac{1}{2}\, \tr(\widetilde E_{ij}^\top \widetilde E_{ij}) I\bigr)\widetilde E_{jk} + \widetilde E_{ij}^* \widetilde E_{ki}^\top \\= \bigl(-[c]_\times[c]_\times + \frac{1}{2}\, \tr([c]_\times[c]_\times) I\bigr)[d]_\times + cc^\top ([c]_\times + [d]_\times) \\= \bigl(-cc^\top + (c^\top c)I - \frac{1}{2}\, 2(c^\top c)I\bigr)[d]_\times + cc^\top [d]_\times \\= -cc^\top[d]_\times + cc^\top [d]_\times = 0_{3\times 3},
\end{multline}
\begin{multline}
\widetilde E_{jk}^\top \widetilde E_{ij}^* + \widetilde E_{jk}^* \widetilde E_{ij}^\top + (\widetilde E_{ij}\widetilde E_{jk}) \diamond \widetilde E_{ki}^\top \\= -[d]_\times cc^\top - dd^\top [c]_\times + (dc^\top - (c^\top d)I) \diamond ([c]_\times + [d]_\times) \\= -[d]_\times cc^\top - dd^\top [c]_\times + (dc^\top)\diamond [c]_\times + (dc^\top)\diamond [d]_\times \\- (c^\top d)([c]_\times + [d]_\times) = -[d]_\times cc^\top - dd^\top [c]_\times \\- [c]_\times dc^\top - dc^\top[d]_\times = 0_{3\times 3}.
\end{multline}
We used that $[a]_\times[b]_\times = ba^\top - (a^\top b)I$ and $([a]_\times)^* = aa^\top$ for arbitrary $3$-vectors $a$ and~$b$. Eqs.~\eqref{eq:necE4}~--~\eqref{eq:necE5} are proved in a similar manner, but the computation is more complicated. Theorem~\ref{thm:necE} is proved.
\end{proof}

\begin{remark}
Constraint~\eqref{eq:sufF} for a triplet of essential matrices is implied by Eq.~\eqref{eq:necE2}. Namely, multiplying~\eqref{eq:necE2} on the right by $E_{jk}^*$ we get~\eqref{eq:sufF}.

We also note that Eq.~\eqref{eq:necE2} is a generalization of Eq.~\eqref{eq:esse}. Setting $k = i$ in~\eqref{eq:necE2} and taking into account that $E_{ii} = 0_{3\times 3}$ we get~\eqref{eq:esse}.
\end{remark}

\begin{theorem}
\label{thm:sufE}
Let $\mathcal E = \{E_{12}, E_{23}, E_{31}\}$ be a triplet of real rank-two essential matrices. Then $\mathcal E$ is compatible if and only if it satisfies Eqs.~\eqref{eq:necE1}~--~\eqref{eq:necE5} from Theorem~\ref{thm:necE}.
\end{theorem}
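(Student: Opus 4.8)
Since necessity is exactly Theorem~\ref{thm:necE}, only sufficiency has to be proved: from real rank-two essential matrices $E_{12},E_{23},E_{31}$ satisfying \eqref{eq:necE1}--\eqref{eq:necE5} one must exhibit $R_1,R_2,R_3\in\mathrm{SO}(3)$ and $3$-vectors $b_1,b_2,b_3$ realising \eqref{eq:Ecompat}. I would begin by exploiting the gauge invariance already noted below \eqref{eq:transform}: both compatibility and the validity of \eqref{eq:necE1}--\eqref{eq:necE5} are preserved under $E_{ij}\mapsto U_iE_{ij}U_j^\top$ with $U_i\in\mathrm{SO}(3)$. If some $E_{ij}=0_{3\times 3}$, compatibility forces the two camera centers to coincide and the triplet collapses to a two-view problem governed by Theorem~\ref{thm:esse}, which is settled directly; so assume all blocks nonzero. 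Since $E_{12}$ is essential, Theorem~\ref{thm:esse} (equivalently, the equal-singular-value normal form) lets me pick $U_1,U_2$ with $\widetilde E_{12}=[c]_\times$ skew-symmetric, $c\neq 0$, and I would spend the residual freedom (the free $U_3$ plus the stabiliser of $[c]_\times$) to align $c$ with a coordinate axis — generically one can in fact put $\widetilde E_{23}$ in skew-symmetric form at the same time.

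Next I would feed the normalisation into the cubic equations. Substituting $\widetilde E_{12}=[c]_\times$ into the copies of \eqref{eq:necE2} that contain $E_{12}$, and using $[c]_\times^2=cc^\top-(c^\top c)I$ and $([c]_\times)^*=cc^\top$, the whole equation collapses to the rank-one relation $cc^\top(\widetilde E_{31}^\top-\widetilde E_{23})=0$, i.e. $\widetilde E_{31}c=\widetilde E_{23}^\top c$; the remaining copies of \eqref{eq:necE2}, together with \eqref{eq:necE3} and the essential equation \eqref{eq:esse} for each of $\widetilde E_{23},\widetilde E_{31}$, then determine the components of $\widetilde E_{23}$ and $\widetilde E_{31}$ along $c$ and along $c^\perp$. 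Reading off the epipoles (the kernels of the blocks), this reduces the problem to a short explicitly parametrised list, the branching being dictated by the mutual position of the three camera centers $b_1,b_2,b_3$ — non-collinear, collinear and distinct, or partly coincident — equivalently by the rank of the $9\times 9$ matrix $E$ of \eqref{eq:9x9matrixE}.

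When the centers are non-collinear, \eqref{eq:necE2} implies \eqref{eq:sufF} (multiply on the right by $E_{jk}^*$ and use $\det E_{jk}=0$, as in the Remark), so Theorem~\ref{thm:sufF} yields $B_i\in\mathrm{GL}(3)$ and $b_i$ with $E_{ij}=B_i^\top[b_i-b_j]_\times B_j$. It then remains to upgrade the $B_i$ to rotations: the content of \eqref{eq:necE1}--\eqref{eq:necE3} not yet consumed, together with the fact that all three $E_{ij}$ are essential (so $E_{ij}E_{ij}^\top$ has a double nonzero eigenvalue for every pair simultaneously), forces $B_iB_i^\top\propto I$; a global rescaling followed by a determinant/orientation adjustment — which I would package as auxiliary lemmas in the Appendix — converts each $B_i^\top$ into an element of $\mathrm{SO}(3)$ and fixes the signs of the translations, yielding \eqref{eq:Ecompat}. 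In this regime \eqref{eq:necE4} and \eqref{eq:necE5} turn out to be consequences of the cubics.

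The collinear case is the heart of the matter, and the main obstacle. Here the epipoles are collinear, Theorem~\ref{thm:sufF} is unavailable, and the cubic equations \eqref{eq:necE1}--\eqref{eq:necE3} together with \eqref{eq:esse} for each block do \emph{not} cut out the compatible locus: after the normalisation above all the relevant null directions lie along one coordinate axis, and the cubics leave spurious solutions corresponding to incompatible relative scalings and twists of the translation triangle. The crux is to show that the quartic \eqref{eq:necE4} and the sextic \eqref{eq:necE5} — the latter being, modulo the cubics, nothing but the eigenvalue relation behind \eqref{eq:lambda}, i.e. $\lambda_1^2=\lambda_2^2+\lambda_3^2$ for the eigenvalues of $E$ — remove exactly these spurious branches, after which $R_1,R_2,R_3\in\mathrm{SO}(3)$ and $b_1,b_2,b_3$ can be written down explicitly. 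Finally the sub-degeneracies within this case (two coincident centers, further vanishing blocks) are disposed of one at a time. I expect this last step — verifying that \eqref{eq:necE4} and \eqref{eq:necE5}, and nothing weaker, eliminate the incompatible branch — to be where the heaviest computation lies and where the Appendix lemmas are really used.
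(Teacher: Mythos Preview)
Your plan diverges substantially from the paper's actual proof, and the divergence is not merely cosmetic. The paper does \emph{not} branch on collinear versus non-collinear epipoles at all, nor does it invoke Theorem~\ref{thm:sufF} and then ``upgrade'' $B_i\in\mathrm{GL}(3)$ to rotations. Instead it pushes the normalisation much further than you do: by choosing $U_2=U_1R_{12}$ and $U_3=U_2R_{23}$ one makes \emph{both} $\widetilde E_{12}$ and $\widetilde E_{23}$ skew-symmetric simultaneously (always, not ``generically''), leaving a residual rotation $R=R_{12}R_{23}R_{31}$ inside $\widetilde E_{31}$; the remaining freedom in $U_1$ is then spent to put $R$ in the two-parameter form~\eqref{eq:reprR} and to kill one coordinate of the third baseline. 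This gives a concrete ten-parameter family~\eqref{eq:tildeE}, and the rest of the proof is an exhaustive case split on $(\lambda,\mu)$ carried out by Gr\"obner-basis and radical-membership computations in \textsf{Macaulay2} (Lemma~\ref{lem:sqrtJ}), with each terminal branch matched against an explicit compatible triplet in Lemmas~\ref{lem:triplets3}--\ref{lem:triplets2}. The role of \eqref{eq:necE4}--\eqref{eq:necE5} is not isolated to a single ``collinear'' branch but is woven into the ideal~$J$ throughout.

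Your proposed route has two genuine gaps. First, in the non-collinear branch the step ``the content of \eqref{eq:necE1}--\eqref{eq:necE3} not yet consumed \ldots\ forces $B_iB_i^\top\propto I$'' is a substantial claim with no argument behind it: a single essential condition on $B_i^\top[b_i-b_j]_\times B_j$ constrains only a combination of $B_i,B_j$, and extracting $B_iB_i^\top\propto I$ from the three-view system requires work you have not indicated how to do. Second, your collinear branch is a statement of what must be shown, not a proof; saying that \eqref{eq:necE4} and \eqref{eq:necE5} ``remove exactly these spurious branches'' is precisely the content of the theorem in that regime, and the paper needs several pages of symbolic computation and a dozen explicit subcases to establish it. If you want to avoid the computer-algebra route you will need a genuinely new idea; as written, your sketch does not contain one.
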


\begin{proof}
The ``only if'' part is by Theorem~\ref{thm:necE}. Let us prove the ``if'' part.

First of all, we simplify triplet $\mathcal E$ by using transform~\eqref{eq:transform}. Each essential matrix from $\mathcal E$ can be represented in form $E_{ij} = [b_{ij}]_\times R_{ij}$, where $R_{ij} \in \mathrm{SO}(3)$ and $b_{ij}$ is a $3$-vector. We set $U_2 = U_1 R_{12}$ and $U_3 = U_2 R_{23} = U_1 R_{12}R_{23}$. Then,
\begin{equation}
\begin{split}
\widetilde E_{12} &= U_1[b_{12}]_\times R_{12} U_2^\top = [U_1b_{12}]_\times,\\
\widetilde E_{23} &= U_2[b_{23}]_\times R_{23} U_3^\top = [U_1R_{12}b_{23}]_\times,\\
\widetilde E_{31} &= U_3[b_{31}]_\times R_{31} U_1^\top = [U_1 R_{12}R_{23}b_{31}]_\times U_1 R U_1^\top,
\end{split}
\end{equation}
where $R = R_{12}R_{23}R_{31}$. Matrix $U_1$ is chosen so that\footnote{Here we use that essential matrices are real and hence so is matrix~$R$. In complex case, representation~\eqref{eq:reprR} holds if and only if the rotation axis $s$ of matrix $R$ satisfies $s^\top s \neq 0$.}
\begin{equation}
\label{eq:reprR}
U_1RU_1^\top = \begin{bmatrix}\lambda & \mu & 0 \\ -\mu & \lambda & 0 \\ 0 & 0 & 1 \end{bmatrix},
\end{equation}
where $\lambda^2 + \mu^2 = 1$ and also $(U_1R_{12}R_{23}b_{31})_1 = 0$. As a result, it suffices to prove the ``if'' part for the triplet
\begin{multline}
\label{eq:tildeE}
\widetilde E_{12} = \begin{bmatrix} 0 & -\gamma_1 & \beta_1 \\ \gamma_1 & 0 & -\alpha_1 \\ -\beta_1 & \alpha_1 & 0 \end{bmatrix},\quad
\widetilde E_{23} = \begin{bmatrix} 0 & -\gamma_2 & \beta_2 \\ \gamma_2 & 0 & -\alpha_2 \\ -\beta_2 & \alpha_2 & 0 \end{bmatrix},\\
\widetilde E_{31} = \begin{bmatrix}\gamma_3\mu & -\gamma_3\lambda & \beta_3 \\ \gamma_3\lambda & \gamma_3\mu & 0 \\ -\beta_3\lambda & -\beta_3\mu & 0 \end{bmatrix},
\end{multline}
where $\lambda, \mu, \alpha_1, \ldots, \gamma_3$ are some scalars. For the purpose of completeness, we write down the epipoles corresponding to triplet~\eqref{eq:tildeE}:
\begin{equation}
\begin{aligned}
e_{12} &= \begin{bmatrix} \alpha_1 & \beta_1 & \gamma_1 \end{bmatrix}^\top, & e_{13} &= \begin{bmatrix} -\beta_3\mu & \beta_3\lambda & \gamma_3 \end{bmatrix}^\top,\\
e_{21} &= \begin{bmatrix} \alpha_1 & \beta_1 & \gamma_1 \end{bmatrix}^\top, & e_{23} &= \begin{bmatrix} \alpha_2 & \beta_2 & \gamma_2 \end{bmatrix}^\top,\\
e_{31} &= \begin{bmatrix} 0 & \beta_3 & \gamma_3 \end{bmatrix}^\top, & e_{32} &= \begin{bmatrix} \alpha_2 & \beta_2 & \gamma_2 \end{bmatrix}^\top.
\end{aligned}
\end{equation}

Let us define an ideal $J \subset \mathbb C[\lambda, \mu, \alpha_1, \ldots, \gamma_3]$ generated by all polynomials from~\eqref{eq:necE1}~--~\eqref{eq:necE5} for triplet $\{\widetilde E_{12}, \widetilde E_{23}, \widetilde E_{31}\}$ and also by polynomial $\lambda^2 + \mu^2 - 1$. Ideal $J$ determines an affine variety $\mathcal V(J) \subset \mathbb C^{10}$. The rest of the proof consists in showing that each point of $\mathcal V(J)$ is either a compatible or degenerate triplet of essential matrices. By degeneration we mean that at least one essential matrix from the triplet is a zero matrix. For the reader's convenience, the main steps of the further proof are schematically shown in Fig.~\ref{fig:scheme}.

We consider the three main cases: (i) $\mu = 0$, $\lambda = 1$; (ii) $\mu = 0$, $\lambda = -1$; (iii) $\mu \neq 0$.

\medskip
\noindent\textbf{Case I: $\mu = 0$, $\lambda = 1$.}

First we note that a polynomial ideal and its radical define the same affine variety, whereas the structure of the radical may be much easier. For example, the Gr\"obner bases of ideal $J$ and its radical $\sqrt J$ w.r.t. the same monomial ordering consist of $217$ and $62$ polynomials respectively. Besides, there exist a simple radical membership test allowing to check whether a given polynomial belongs to the radical or not, see Lemma~\ref{lem:sqrtJ} from the Appendix. This explains why we use the radicals of ideals throughout the further proof.

Let us define the polynomials
\begin{equation}
\begin{alignedat}{2}
f_1 &= \alpha_1 + \alpha_2, &\quad g_3 &= \alpha_2\beta_3,\\
f_2 &= \beta_1 + \beta_2 + \beta_3, &\quad g_4 &= \alpha_2\gamma_3,\\
f_3 &= \gamma_1 + \gamma_2 + \gamma_3, &\quad g_5 &= \beta_1\gamma_2 - \beta_2\gamma_1,\\
g_1 &= \alpha_1\beta_3, &\quad g_6 &= \beta_2\gamma_3 - \beta_3\gamma_2,\\
g_2 &= \alpha_1\gamma_3, &\quad g_7 &= \beta_1\gamma_3 - \beta_3\gamma_1.
\end{alignedat}
\end{equation}
Then, by the radical membership test, we get
\begin{equation}
\label{eq:caseIpols}
f_ig_j \in \sqrt{J + \langle \mu, \lambda - 1 \rangle}
\end{equation}
for all indices $i$ and $j$.

First suppose that $f_i \neq 0$ for a certain $i$. Then it follows from~\eqref{eq:caseIpols} that $\alpha_1\beta_3 = \alpha_1\gamma_3 = \alpha_2\beta_3 = \alpha_2\gamma_3 = 0$. If $\alpha_1 \neq 0$ or $\alpha_2 \neq 0$, then $\beta_3 = \gamma_3 = 0$ and we get $\widetilde E_{31} = 0_{3\times 3}$ in contradiction with the rank-two condition. Therefore $\alpha_1 = \alpha_2 = 0$ and it follows from~\eqref{eq:caseIpols} that
\begin{equation}
\label{eq:caseIbetagamma}
\beta_1\gamma_2 - \beta_2\gamma_1 = \beta_2\gamma_3 - \beta_3\gamma_2 = \beta_1\gamma_3 - \beta_3\gamma_1 = 0.
\end{equation}

The variables $\beta_3$ and $\gamma_3$ cannot be zero simultaneously. Without loss of generality, suppose that $\gamma_3 \neq 0$ and introduce parameter $\delta = \beta_3/\gamma_3$. Then Eqs.~\eqref{eq:caseIbetagamma} imply $\beta_i = \delta\gamma_i$ for all~$i$. Using the radical membership test, one verifies that the variables $\gamma_i$ are constrained by
\begin{equation}
\label{eq:caseIgamma}
(\gamma_1 + \gamma_2 + \gamma_3)(-\gamma_1 + \gamma_2 + \gamma_3)(\gamma_1 - \gamma_2 + \gamma_3)(\gamma_1 + \gamma_2 - \gamma_3) = 0,
\end{equation}
that is $\gamma_3 = \epsilon_1\gamma_1 + \epsilon_2\gamma_2$ with $\epsilon_i = \pm 1$. The triplet of essential matrices has the form~\eqref{eq:triplet10} and is compatible by Lemma~\ref{lem:triplets3}.

Consider the case $f_1 = f_2 = f_3 = 0$, that is
\begin{equation}
\alpha_1 + \alpha_2 = \beta_1 + \beta_2 + \beta_3 = \gamma_1 + \gamma_2 + \gamma_3 = 0.
\end{equation}
The triplet of essential matrices has the form~\eqref{eq:triplet1} and is compatible by Lemma~\ref{lem:triplets}.

\medskip
\noindent\textbf{Case II: $\mu = 0$, $\lambda = -1$.}

Let $J' = J + \langle \mu, \lambda + 1 \rangle$. Ideal $\sqrt{J'}$ contains the following polynomials:
\begin{equation}
\begin{aligned}
&\alpha_i\beta_3\gamma_1, &\quad &\alpha_i\beta_3(\alpha_1 - \alpha_2),\\
&\alpha_i\beta_3\gamma_2, &\quad &\alpha_i\beta_3(\beta_1 - \beta_2 + \beta_3),\\
&\alpha_i\beta_3\gamma_3, &\quad &\alpha_i\beta_3\beta_1\beta_2(\beta_1\beta_2 + \alpha_1^2),
\end{aligned}
\end{equation}
where $i = 1, 2$. Supposing that $\alpha_1\beta_3 \neq 0$ or $\alpha_2\beta_3 \neq 0$, we get
\begin{multline}
\label{eq:case2eqs}
\gamma_1 = \gamma_2 = \gamma_3 = \alpha_1 - \alpha_2 = \beta_1 - \beta_2 + \beta_3 \\= \beta_1\beta_2(\beta_1\beta_2 + \alpha_1^2) = \beta_1\beta_2(\beta_1\beta_2 + \alpha_2^2) = 0.
\end{multline}
The case $\beta_1 = \beta_2 = 0$ contradicts to the rank-two condition, since leads to $\widetilde E_{31} = 0_{3\times 3}$. If $\beta_1 = 0$ and $\beta_2 \neq 0$, then triplet $\{\widetilde E_{12}, \widetilde E_{23}, \widetilde E_{31}\}$ has the form~\eqref{eq:triplet2} and is compatible by Lemma~\ref{lem:triplets}. Similarly we get compatible triplet~\eqref{eq:triplet2_1} in the case $\beta_2 = 0$ and $\beta_1 \neq 0$. Finally, if $\beta_1\beta_2 \neq 0$, then it follows from~\eqref{eq:case2eqs} that $\beta_1\beta_2 + \alpha_1^2 = 0$ and so $\beta_2 = -\alpha_1^2/\beta_1$. Triplet $\{\widetilde E_{12}, \widetilde E_{23}, \widetilde E_{31}\}$ has the form~\eqref{eq:triplet3} and is compatible by Lemma~\ref{lem:triplets}.

Now consider the case $\alpha_1\beta_3 = \alpha_2\beta_3 = 0$. There are two possibilities. If $\beta_3 = 0$, then ideal $\sqrt{J' + \langle \beta_3 \rangle}$ contains the following polynomials:
\begin{equation}
\begin{aligned}
&\gamma_3(\alpha_1 + \alpha_2), &\quad &\alpha_2\gamma_3(\gamma_1 + \gamma_2 - \gamma_3), \\
&\gamma_3(\beta_1 + \beta_2), &\quad &\beta_2\gamma_3(\gamma_1 + \gamma_2 - \gamma_3).
\end{aligned}
\end{equation}
The case $\gamma_3 = 0$ contradicts to the rank-two condition. Thus, $\alpha_1 + \alpha_2 = \beta_1 + \beta_2 = 0$. It follows that if $\alpha_2 = \beta_2 = 0$, then $\alpha_1 = \beta_1 = 0$ as well. By the radical membership test, variables $\gamma_1$, $\gamma_2$, and $\gamma_3$ are constrained by~\eqref{eq:caseIgamma} and hence we get a particular case ($\delta = 0$) of triplet~\eqref{eq:triplet10} which is compatible by Lemma~\ref{lem:triplets3}. On the other hand, if $\alpha_2 \neq 0$ or $\beta_2 \neq 0$, then we get
\begin{equation}
\alpha_1 + \alpha_2 = \beta_1 + \beta_2 = \gamma_1 + \gamma_2 - \gamma_3 = 0.
\end{equation}
Triplet $\{\widetilde E_{12}, \widetilde E_{23}, \widetilde E_{31}\}$ is a particular case ($\beta_3 = 0$) of~\eqref{eq:triplet1} which is compatible by Lemma~\ref{lem:triplets}.

The second possibility is $\beta_3 \neq 0$ and $\alpha_1 = \alpha_2 = 0$. Denote $J'' = J' + \langle \alpha_1, \alpha_2 \rangle$ and define the polynomials
\begin{equation}
\begin{split}
h_1 &= \beta_1\beta_2\beta_3\,(\beta_1(\beta_1 + \beta_2 - \beta_3) + \gamma_1(\gamma_1 + \gamma_2 + \gamma_3)),\\
h_2 &= \beta_1\beta_2\beta_3\,(\beta_2(\beta_1 + \beta_2 + \beta_3) + \gamma_2(\gamma_1 + \gamma_2 + \gamma_3)),\\
h_3 &= \beta_1\beta_2\beta_3\,(\beta_3(-\beta_1 + \beta_2 + \beta_3) + \gamma_3(\gamma_1 + \gamma_2 + \gamma_3)).
\end{split}
\end{equation}
The radical membership test yields
\begin{equation}
\label{eq:caseIIpols}
h_i \in \sqrt{J''}
\end{equation}
for all $i$.

Consider the case $\beta_1\beta_2 = 0$. If $\beta_1 = 0$, then ideal $\sqrt{J'' + \langle \beta_1 \rangle}$ contains the polynomials
\begin{equation}
\gamma_1(\beta_2 + \beta_3), \quad \gamma_1\beta_3(\gamma_1 - \gamma_2 - \gamma_3).
\end{equation}
The case $\gamma_1 = 0$ leads to $\widetilde E_{12} = 0_{3\times 3}$ and hence contradicts to the rank-two condition. Therefore we get
\begin{equation}
\alpha_1 = \alpha_2 = \beta_1 = \beta_2 + \beta_3 = \gamma_1 - \gamma_2 - \gamma_3 = 0.
\end{equation}
Triplet $\{\widetilde E_{12}, \widetilde E_{23}, \widetilde E_{31}\}$ has the form~\eqref{eq:triplet4} and is compatible by Lemma~\ref{lem:triplets}. Similarly we get compatible triplet~\eqref{eq:triplet4_1} if $\beta_2 = 0$.

Finally, let $\beta_1\beta_2 \neq 0$. The case $\gamma_1 + \gamma_2 + \gamma_3 = 0$ is impossible, since ideal $\sqrt{J'' + \langle \gamma_1 + \gamma_2 + \gamma_3 \rangle}$ contains $\beta_1\beta_2\beta_3 \neq 0$. Let us denote $\delta = -1/(\gamma_1 + \gamma_2 + \gamma_3)$. Then it follows from~\eqref{eq:caseIIpols} that
\begin{equation}
\begin{split}
\gamma_1 &= \delta\beta_1(\beta_1 + \beta_2 - \beta_3),\\
\gamma_2 &= \delta\beta_2(\beta_1 + \beta_2 + \beta_3),\\
\gamma_3 &= \delta\beta_3(-\beta_1 + \beta_2 + \beta_3).
\end{split}
\end{equation}
Triplet $\{\widetilde E_{12}, \widetilde E_{23}, \widetilde E_{31}\}$ has the form~\eqref{eq:triplet5} and is compatible by Lemma~\ref{lem:triplets}.

\medskip
\noindent\textbf{Case III: $\mu \neq 0$.}

Ideal $\sqrt J$ contains the following polynomials:
\begin{equation}
\mu\alpha_2\beta_3\gamma_1, \quad \mu\alpha_2\gamma_3, \quad \mu\alpha_2\beta_3\gamma_2.
\end{equation}
Since $\mu \neq 0$, we have $\alpha_2\gamma_3 = \beta_2\gamma_3 - \beta_3\gamma_2 = 0$.

First suppose that $\gamma_i \neq 0$ for a certain $i$. Then we get $\alpha_2\beta_3 = 0$. If $\alpha_2 \neq 0$, then $\beta_3 = \gamma_3 = 0$ and hence $\widetilde E_{31} = 0_{3\times 3}$ in contradiction with the rank-two condition. If $\alpha_2 = 0$, then ideal $\sqrt{J + \langle \alpha_2 \rangle}$ contains $\alpha_1\beta_2$ and $\alpha_1\gamma_2$. If $\alpha_1 \neq 0$, then $\widetilde E_{23} = 0_{3\times 3}$ in contradiction with the rank-two condition. Continuing this way one concludes that $\beta_1 = \beta_2 = \beta_3 = 0$. Ideal $\sqrt{J + \langle \alpha_1, \alpha_2, \beta_1, \beta_2, \beta_3 \rangle}$ contains $\mu\gamma_1\gamma_2\gamma_3$. Condition $\mu \neq 0$ implies that at least one $\gamma_i = 0$ and so at least one essential matrix $\widetilde E_{ij}$ is a zero matrix. This contradicts to the rank-two condition.

Consider the case $\gamma_i = 0$ for all $i$. Ideal $\sqrt{J + \langle \gamma_1, \gamma_2, \gamma_3 \rangle}$ contains $\beta_3(\alpha_1\lambda + \beta_1\mu + \alpha_2)$. Since $\beta_3 \neq 0$ (otherwise $\widetilde E_{31} = 0_{3\times 3}$), we conclude that $\alpha_1\lambda + \beta_1\mu + \alpha_2 = 0$. Denote $J' = J + \langle \gamma_1, \gamma_2, \gamma_3, \alpha_1\lambda + \beta_1\mu + \alpha_2\rangle$ and define the polynomials
\begin{equation}
\begin{alignedat}{2}
p_1 &= \alpha_1 - \alpha_2, &\quad q_3 &= \beta_3\lambda - \beta_1 + \beta_2,\\
p_2 &= \beta_1 - \beta_2 + \beta_3, &\quad r_1 &= \mu(\alpha_1\mu - \beta_1\lambda - \beta_2 + \beta_3),\\
p_3 &= \alpha_1\mu - \beta_1(\lambda - 1), &\quad r_2 &= \alpha_2\beta_3 + \beta_1\alpha_2 - \alpha_1\beta_2,\\
q_1 &= \alpha_2\mu - \beta_2(\lambda - 1), &\quad r_3 &= \mu(\alpha_2(\alpha_1 + \alpha_2)\\
q_2 &= \beta_3\mu + \alpha_1 - \alpha_2, & &\qquad+ \beta_2(\beta_1 + \beta_2 - \beta_3)).
\end{alignedat}
\end{equation}
Then, by the radical membership test, we get
\begin{equation}
\label{eq:caseIIIpols}
\alpha_2\beta_3p_iq_jr_k \in \sqrt{J'}
\end{equation}
for all indices $i$, $j$, and $k$.

Suppose that $\alpha_2\beta_3 = 0$. Since $\beta_3 \neq 0$ (otherwise $\widetilde E_{31} = 0_{3\times 3}$), we conclude that $\alpha_2 = 0$. Ideal $\sqrt{J' + \langle \alpha_2 \rangle}$ contains $\alpha_1\beta_2$. Since $\beta_2 \neq 0$ (otherwise $\widetilde E_{23} = 0_{3\times 3}$), we have $\alpha_1 = 0$. Ideal $\sqrt{J' + \langle \alpha_1, \alpha_2 \rangle}$ contains $\mu\beta_1$. Since $\mu \neq 0$ we have $\beta_1 = 0$ and thus $\widetilde E_{12} = 0_{3\times 3}$ in contradiction with the rank-two condition.

Let $\alpha_2\beta_3 \neq 0$. Then it follows from~\eqref{eq:caseIIIpols} that
\begin{equation}
p_1q_1r_1 = 0.
\end{equation}
Each of the three cases $p_1 = q_1 = 0$, $p_1 = r_1 = 0$, and $q_1 = r_1 = 0$ is impossible, since leads to $\mu\alpha_2\beta_3 = 0$ in contradiction with $\mu \neq 0$ and $\alpha_2\beta_3 \neq 0$.

Suppose that $p_1 = 0$, $q_1 \neq 0$, and $r_1 \neq 0$. Then it follows from~\eqref{eq:caseIIIpols} that $p_i = 0$ for all~$i$. Triplet $\{\widetilde E_{12}, \widetilde E_{23}, \widetilde E_{31}\}$ has the form~\eqref{eq:triplet7} and is compatible by Lemma~\ref{lem:triplets2}.

Let $q_1 = 0$, $p_1 \neq 0$, and $r_1 \neq 0$. Then we have $q_j = 0$ for all~$j$. Triplet $\{\widetilde E_{12}, \widetilde E_{23}, \widetilde E_{31}\}$ has the form~\eqref{eq:triplet8} and is compatible by Lemma~\ref{lem:triplets2}.

Finally, if $r_1 = 0$, $p_1 \neq 0$, and $q_1 \neq 0$, then $r_k = 0$ for all~$k$. Taking into account that $\mu \neq 0$, after some computation, we conclude that triplet $\{\widetilde E_{12}, \widetilde E_{23}, \widetilde E_{31}\}$ has the form~\eqref{eq:triplet9} and is compatible by Lemma~\ref{lem:triplets2}. We note that the denominator $p_3 = \alpha_1\mu - \beta_1(\lambda - 1)$ in~\eqref{eq:beta23} does not vanish, since otherwise $\sqrt{J' + \langle r_1, r_2, r_3, p_3 \rangle}$ contains $\mu\alpha_2\beta_3 \neq 0$.

To summarize, we have shown that in all cases the triplet of real rank-two essential matrices $\{\widetilde E_{12}, \widetilde E_{23}, \widetilde E_{31}\}$ that satisfies Eqs.~\eqref{eq:necE1}~--~\eqref{eq:necE5} is compatible. It follows from~\eqref{eq:transform} that triplet $\{E_{12}, E_{23}, E_{31}\}$ is compatible too as required. Theorem~\ref{thm:sufE} is proved.
\end{proof}

\begin{figure*}
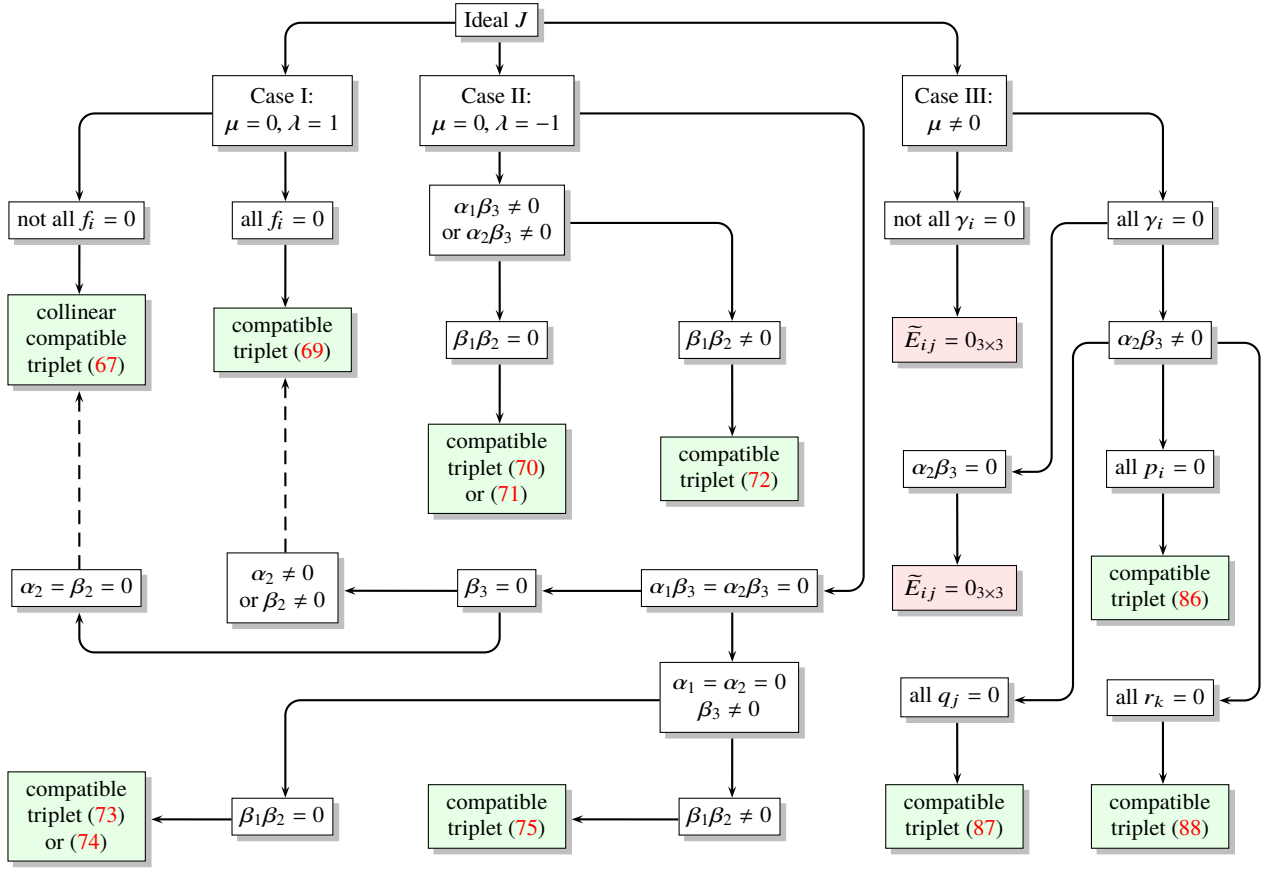

\centering
\hspace{0cm}
\begin{psmatrix}[mnode=r,colsep=0.8,rowsep=0.4]
    & & [name=J] \bx{Ideal $J$}\\
    & [name=cI] \bx{$\begin{array}{c}\text{Case I:} \\ \mu = 0, \lambda = 1\end{array}$} & [name=cII] \bx{$\begin{array}{c}\text{Case II:} \\ \mu = 0, \lambda = -1\end{array}$} & & [name=cIII] \bx{$\begin{array}{c}\text{Case III:} \\ \mu \neq 0\end{array}$}\\
    [name=cI2] \bx{not all $f_i = 0$} & [name=cI1] \bx{all $f_i = 0$} & [name=cII1] \bx{$\begin{array}{c}\alpha_1\beta_3 \neq 0\\ \text{or } \alpha_2\beta_3 \neq 0\end{array}$} &  & [name=cIII1] \bx{not all $\gamma_i = 0$} & [name=cIII2] \bx{all $\gamma_i = 0$}\\
    [name=deg1] \collin{eq:triplet10} & [name=comp1] \compat{eq:triplet1} & [name=cII12] \bx{$\beta_1\beta_2 = 0$} & [name=cII13] \bx{$\beta_1\beta_2 \neq 0$} & [name=deg3] \degen & [name=cIII22] \bx{$\alpha_2\beta_3 \neq 0$}\\
     &  & [name=comp2_1] \compator{eq:triplet2}{eq:triplet2_1} & [name=comp3] \compat{eq:triplet3} & [name=cIII21] \bx{$\alpha_2\beta_3 = 0$} & [name=cIII221] \bx{all $p_i = 0$}\\
    [name=cII211] \bx{$\alpha_2 = \beta_2 = 0$} & [name=cII212] \bx{$\begin{array}{c}\alpha_2 \neq 0\\ \text{or } \beta_2 \neq 0\end{array}$} & [name=cII21] \bx{$\beta_3 = 0$} & [name=cII2] \bx{$\alpha_1\beta_3 = \alpha_2\beta_3 = 0$} & [name=deg4] \degen & [name=comp7] \compat{eq:triplet7}\\
    &  &  & [name=cII22] \bx{$\begin{array}{c}\alpha_1 = \alpha_2 = 0\\ \beta_3 \neq 0\end{array}$} & [name=cIII222] \bx{all $q_j = 0$} & [name=cIII223] \bx{all $r_k = 0$}\\
    [name=comp5] \compator{eq:triplet4}{eq:triplet4_1} & [name=cII221] \bx{$\beta_1\beta_2 = 0$} & [name=comp6] \compat{eq:triplet5} & [name=cII222] \bx{$\beta_1\beta_2 \neq 0$} & [name=comp8] \compat{eq:triplet8} & [name=comp9] \compat{eq:triplet9}
    %Connections
    \ncangle[angleA=180,angleB=90,arm=10pt]{->}{J}{cI}
    \ncline{->}{J}{cII}
    \ncangle[angleA=0,angleB=90,arm=10pt]{->}{J}{cIII}
    \ncangle[angleA=180,angleB=90,arm=15pt]{->}{cI}{cI2}
    \ncline{->}{cI}{cI1}
    \ncline{->}{cII}{cII1}
    \ncangle[angleA=0,angleB=0,arm=15pt]{->}{cII}{cII2}
    \ncline{->}{cIII}{cIII1}
    \ncangle[angleA=0,angleB=90,arm=15pt]{->}{cIII}{cIII2}
    \ncline{->}{cI1}{comp1}
    \ncline{->}{cI2}{deg1}
    \ncangle[angleA=0,angleB=90,arm=15pt]{->}{cII1}{cII13}
    \ncline{->}{cII1}{cII12}
    \ncline{->}{cII12}{comp2_1}
    \ncline{->}{cII13}{comp3}
    \ncline{->}{cIII1}{deg3}
    \ncline{->}{cII2}{cII21}
    \ncline{->}{cII2}{cII22}
    \ncangle[angleA=270,angleB=270,arm=15pt]{->}{cII21}{cII211}
    \ncline{->}{cII21}{cII212}
    \ncline[linestyle=dashed]{->}{cII211}{deg1}
    \ncline[linestyle=dashed]{->}{cII212}{comp1}
    \ncline{->}{cII22}{cII222}
    \ncangle[angleA=180,angleB=90,arm=15pt]{->}{cII22}{cII221}
    \ncline{->}{cII221}{comp5}
    \ncline{->}{cII222}{comp6}
    \ncangle[angleA=180,angleB=0,arm=15pt]{->}{cIII2}{cIII21}
    \ncline{->}{cIII2}{cIII22}
    \ncline{->}{cIII22}{cIII221}
    \ncangle[angleA=180,angleB=0,arm=22pt]{->}{cIII22}{cIII222}
    \ncangle[angleA=0,angleB=0,arm=15pt]{->}{cIII22}{cIII223}
    \ncline{->}{cIII21}{deg4}
    \ncline{->}{cIII221}{comp7}
    \ncline{->}{cIII222}{comp8}
    \ncline{->}{cIII223}{comp9}
\end{psmatrix}
\caption{To the proof of Theorem~\ref{thm:sufE}. Every real point of the variety defined by the ideal $J$ is either a compatible triplet or a degenerate triplet with at least one $\widetilde E_{ij} = 0_{3\times 3}$. The dashed arrow means correspondence to a particular case}
\label{fig:scheme}
\end{figure*}

\begin{remark}
Although the $82$ cubic equations from Theorem~\ref{thm:sufE} are linearly independent, some of these equations may be algebraically dependent and therefore redundant. It is clear that if an ideal generated by a certain subset of Eqs.~\eqref{eq:necE1}~--~\eqref{eq:necE5} equals ideal $J$ defined in the proof, then Theorem~\ref{thm:sufE} remains valid for this subset. The equality of ideals is readily verified by computing their reduced Gr\"obner bases. In this way we found that Eq.~\eqref{eq:necE1} and $27$ equations of type~\eqref{eq:necE2} for indices $(i, j, k) \in \{(1, 3, 2), (3, 2, 1), (2, 1, 3)\}$ are redundant and may be omitted. Theorem~\ref{thm:sufE} holds for the remaining $56$ equations.

We also note that Eq.~\eqref{eq:necE4} only affects the compatibility of triplets with collinear epipoles, that is without Eq.~\eqref{eq:necE4} Theorem~\ref{thm:sufE} remains valid for a triplet of real rank-two essential matrices with non-collinear epipoles.
\end{remark}

\section{Applications}
\label{sec:application}

Among the possible applications of Theorem~\ref{thm:sufE} we outline the following two ones.

\subsection{Three-view Auto-Calibration}

Let $\{F_{12}, F_{23}, F_{31}\}$ be a compatible triplet of fundamental matrices and $K_i$ be the calibration matrix of the $i$th camera. Then we can write $\lambda_i \lambda_j E_{ij} = K_i^\top F_{ij} K_j$ for certain scalars $\lambda_1, \ldots, \lambda_3$, cf.~\eqref{eq:F}. Substituting this into Eqs.~\eqref{eq:necE1}~--~\eqref{eq:necE5}, we get by a straightforward computation the following equations:
\begin{align}
\label{eq:auto1}
\tr(C_1F_{12}C_2F_{23}C_3F_{31}) &= 0,\\
\label{eq:auto2}
F_{ij}^\top C_i F_{ij} C_j F_{jk} - \frac{1}{2} \tr(C_j F_{ij}^\top C_i F_{ij})\, F_{jk}\qquad \notag\\+ C_j^*\, F_{ij}^* F_{ki}^\top &= 0_{3\times 3},\\
\label{eq:auto3}
C_kF_{jk}^\top F_{ij}^* + F_{jk}^* F_{ij}^\top C_i + (F_{ij} C_j F_{jk}) \diamond F_{ki}^\top &= 0_{3\times 3},\\
\label{eq:auto5}
\tr^3((CF)^2) - 12\tr((CF)^2)\tr((CF)^4)\qquad \notag\\+ 32\tr((CF)^6) &= 0,
\end{align}
where $C_i = \lambda_i(\det K_i)^{-1}(K_iK_i^\top)$, and $C = \diag(C_1, C_2, C_3)$, and $F$ is the symmetric $9\times 9$ matrix constructed from $F_{12}$, $F_{23}$, and $F_{31}$ similarly as in formula~\eqref{eq:9x9matrixE}. It is clear that $K_iK_i^\top = C_i/(C_i)_{33}$ and therefore the calibration matrix can be estimated from $C_i$ by the Cholesky factorization.

Given $F_{12}$, $F_{23}$, and $F_{31}$, only matrices $C_i$ are constrained by Eqs.~\eqref{eq:auto1}~--~\eqref{eq:auto5} and hence these equations can be used to solve the camera auto-calibration problem in three and more views. We note that Eq.~\eqref{eq:auto5} is sextic, Eq.~\eqref{eq:auto1} is cubic, Eq.~\eqref{eq:auto2} is quadratic, and Eq.~\eqref{eq:auto3} is linear in the entries of matrix $C$. The auto-calibration constraint corresponding to Eq.~\eqref{eq:necE4} is of degree $10$ in the entries of $C$. We do not write it here.

\subsection{Incremental Structure from Motion}

In incremental structure from motion a set of essential matrices arises from independently solved two-view relative pose estimation problems. Due to the noise in input data, scale ambiguity, and other factors, the estimated essential matrices are hardly compatible. Their rectification (averaging) is one of the possible approaches to overcome the incompatibility. In~\cite{Kasten19}, the authors proposed a novel solution to the essential matrix averaging problem based on their own algebraic characterization of compatible sets of essential matrices. Although the method from~\cite{Kasten19} showed good results outperforming the existing state-of-the-art solutions both in accuracy and in speed, it is not applicable to the practically important case of cameras with collinear centers.

Polynomial constraints~\eqref{eq:necE1}~--~\eqref{eq:necE5} could also be used to solve the rectification problem. In the simplest form it can be stated as follows: given a triplet of essential matrices $\hat{\mathcal E} = \{\hat E_{12}, \hat E_{23}, \hat E_{31}\}$, find a compatible triplet $\mathcal E$ and scale factors $\Lambda = \{\lambda_{12}, \lambda_{23}, \lambda_{31}\}$ so that $\|\Lambda\|^2 = 1$ and $\mathcal E$ is the closest to $\hat{\mathcal E}_\Lambda = \{\lambda_{12}\hat E_{12}, \lambda_{23}\hat E_{23}, \lambda_{31}\hat E_{31}\}$ w.r.t. the Frobenius norm. Thus we have the following polynomial optimization problem:
\begin{equation}
\label{eq:pop}
\begin{split}
&\min\limits_{\Lambda, \mathcal E}\, \|\mathcal E - \hat{\mathcal E}_\Lambda\|^2\\
&\text{subject to}\quad \mathcal E \in \mathcal V(J), \quad \|\Lambda\|^2 = 1,
\end{split}
\end{equation}
where $J$ is the homogeneous ideal generated by all forms in~\eqref{eq:necE1}~--~\eqref{eq:necE5} and $\mathcal V(J)$ is the corresponding projective variety. Problem~\eqref{eq:pop} can be further solved by using iterative methods for constrained optimization, e.g. sequential quadratic programming method, augmented Lagrangian method, etc.

\section{Discussion}
\label{sec:disc}

In this paper we propose new necessary and sufficient conditions of the compatibility of three real rank-two essential matrices (Theorems~\ref{thm:necE} and~\ref{thm:sufE}). By compatibility we mean the correspondence of the essential matrices to a certain configuration of three calibrated cameras. The conditions have the form of $82$ cubic, one quartic, and one sextic homogeneous polynomial equations. We would like to emphasize that (i) these equations relate to the calibrated case only and in general do not hold for compatible triplets of fundamental matrices; (ii) the sufficiency of the equations covers the case of cameras with collinear centers.

The possible applications of the constraints may include multiview relative pose estimation, auto-calibration, essential matrix averaging for incremental structure from motion, etc. Regarding the auto-calibration, it is worth mentioning that some of our equations (Eq.~\eqref{eq:auto3}) turn out to be linear in the entries of matrix incorporating the calibration parameters. This unexpected result could be useful in developing novel auto-calibration solutions.

\appendix
\section*{Appendix}

We collect here several technical lemmas that we used in the proof of Theorem~\ref{thm:sufE}.

Recall that the \emph{radical} of an ideal $J$, denoted $\sqrt J$, is given by the set of polynomials which have a power belonging to~$J$:
\[
\sqrt J = \{p \mid p^k \in J \text{ for some integer $k \geq 1$}\}.
\]
It is known that $\sqrt J$ is an ideal and the affine varieties of $J$ and $\sqrt J$ coincide. The following lemma gives a convenient tool to check whether a given polynomial is in the radical or not.
\begin{lemma}[\cite{CLS}]
\label{lem:sqrtJ}
Let $J = \langle p_1, \ldots, p_s \rangle \subset \mathbb C[\xi_1, \ldots, \xi_n]$ be an ideal. Then a polynomial $p \in \sqrt J$ if and only if $1 \in \widetilde J = \langle p_1, \ldots, p_s, 1 - \tau p \rangle \subset \mathbb C[\xi_1, \ldots, \xi_n, \tau]$.
\end{lemma}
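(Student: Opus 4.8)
The statement is the classical Rabinowitsch trick, and the natural plan is to prove the two implications separately, exploiting the auxiliary variable $\tau$ in $\widetilde J$ to encode the invertibility of $p$. Both directions can be made completely elementary: the forward one through a telescoping factorization, the backward one through a substitution that annihilates the generator $1 - \tau p$. Throughout I abbreviate the tuple $(\xi_1,\ldots,\xi_n)$ by $\xi$; the degenerate case $p \equiv 0$ is trivial (then $0 = p \in \sqrt J$ always, and $1 - \tau p = 1 \in \widetilde J$), so I assume $p \not\equiv 0$.

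\textbf{Forward direction.} Assuming $p \in \sqrt J$, there is an integer $k \geq 1$ with $p^k \in J$, say $p^k = \sum_i h_i p_i$, $h_i \in \mathbb C[\xi]$. Working in $\mathbb C[\xi,\tau]$, I would start from the trivial identity
\[
1 = \tau^k p^k + \bigl(1 - \tau^k p^k\bigr).
\]
The first summand equals $\sum_i (\tau^k h_i)\,p_i$ and so lies in $\widetilde J$, since each $p_i$ does. The second summand factors by the geometric-series identity
\[
1 - (\tau p)^k = (1 - \tau p)\bigl(1 + \tau p + \cdots + (\tau p)^{k-1}\bigr),
\]
exhibiting it as a multiple of the generator $1 - \tau p$, hence also in $\widetilde J$. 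Adding, $1 \in \widetilde J$.

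\textbf{Backward direction.} Assuming $1 \in \widetilde J$, I would fix a representation
\[
1 = \sum_i A_i(\xi,\tau)\,p_i(\xi) + B(\xi,\tau)\,\bigl(1 - \tau p(\xi)\bigr),\qquad A_i, B \in \mathbb C[\xi,\tau].
\]
The key move is to substitute $\tau = 1/p$, which kills the last term. To make this precise I would apply the $\mathbb C[\xi]$-algebra homomorphism $\mathbb C[\xi,\tau] \to \mathbb C[\xi][1/p]$ sending $\tau \mapsto 1/p$; since $1 - \tau p \mapsto 0$, the identity collapses to $1 = \sum_i A_i(\xi,1/p)\,p_i(\xi)$ in the localization $\mathbb C[\xi][1/p]$. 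Choosing $N$ at least the $\tau$-degree of every $A_i$ makes each $C_i(\xi) := p^N A_i(\xi,1/p)$ an honest polynomial, and multiplying through by $p^N$ gives $p^N = \sum_i C_i(\xi)\,p_i(\xi) \in J$, so $p \in \sqrt J$.

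The only genuinely delicate point is justifying the substitution $\tau = 1/p$ in the backward direction; passing to the localization $\mathbb C[\xi][1/p]$ as above handles this cleanly, after which clearing denominators is routine bookkeeping of $\tau$-degrees. I would emphasize that this argument invokes neither the Nullstellensatz nor algebraic closure of the coefficient field. A geometric alternative for the backward implication does use both: from $1 \in \widetilde J$ the weak Nullstellensatz gives $\mathcal V(\widetilde J) = \emptyset$ in $\mathbb C^{n+1}$, while if $p \notin \sqrt J = I(\mathcal V(J))$ there would be a point of $\mathcal V(J)$ with $p \neq 0$, which lifts to a point of $\mathcal V(\widetilde J)$ via $\tau = 1/p$, a contradiction. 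I prefer the substitution proof, as it is shorter and field-agnostic.
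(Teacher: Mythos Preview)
Your proof is correct: both directions of the Rabinowitsch trick are argued cleanly, the telescoping factorization for the forward direction and the localization substitution $\tau \mapsto 1/p$ followed by denominator-clearing for the backward direction are standard and fully justified, and you handle the degenerate case $p \equiv 0$.

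There is nothing to compare against, however: the paper does not prove this lemma at all. It is stated with the citation \texttt{[CLS]} (Cox--Little--O'Shea) and used as a black box, the only remark being that $p \in \sqrt J$ iff the reduced Gr\"obner basis of $\widetilde J$ is $\{1\}$, which is how the radical membership test is carried out computationally in the proof of Theorem~\ref{thm:sufE}. Your write-up thus supplies what the paper omits. Your closing observation that the algebraic substitution argument avoids the Nullstellensatz and works over any field is accurate and worth keeping, though for the paper's purposes (complex coefficients throughout) either route would suffice.
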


By Lemma~\ref{lem:sqrtJ}, a polynomial $p \in \sqrt J$ if and only if the (reduced) Gr\"{o}bner basis of $\widetilde J$ is~$\{1\}$. In the proof of Theorem~\ref{thm:sufE}, we used the computer algebra system \textsf{Macaulay2}~\cite{macaulay} to compute the Gr\"obner bases. The computation time did not exceed $3$ seconds per basis.

\begin{lemma}
\label{lem:sufRt}
Let essential matrices $E_{12}$, $E_{23}$, $E_{31}$ be represented in form $E_{ij} = [b_{ij}]_\times R_{ij}$. If matrices $R_{ij}$ and vectors $b_{ij}$ are constrained by
\begin{align}
\label{eq:sufRt1}
R_{12}R_{23}R_{31} &= I,\\
\label{eq:sufRt2}
R_{12}^\top b_{12} + R_{23}b_{31} + b_{23} &= 0,
\end{align}
then triplet $\{E_{12}, E_{23}, E_{31}\}$ is compatible.
\end{lemma}

\begin{proof}
Let $R_{ij}$ and $b_{ij}$ satisfy Eqs.~\eqref{eq:sufRt1}~--~\eqref{eq:sufRt2}. Then Eq.~\eqref{eq:Ecompat} has the following possible solution for $R_i$ and~$b_i$:
\begin{equation}
\begin{alignedat}{2}
R_1 &= I, &\quad b_1 &= 0,\\
R_2 &= R_{12}^\top, &\quad b_2 &= -b_{12},\\
R_3 &= R_{31}, &\quad b_3 &= R_{31}^\top b_{31}.
\end{alignedat}
\end{equation}
It follows that triplet $\{E_{12}, E_{23}, E_{31}\}$ is compatible. Lemma~\ref{lem:sufRt} is proved.
\end{proof}

\begin{lemma}
\label{lem:triplets3}
The following triplet of essential matrices with collinear epipoles is compatible:
\begin{equation}
\label{eq:triplet10}
E_{12} = \gamma_1 [s]_\times,\quad
E_{23} = \gamma_2 [s]_\times,\quad
E_{31} = (\epsilon_1\gamma_1 + \epsilon_2\gamma_2) [s]_\times,
\end{equation}
where $\epsilon_i = \pm 1$, $s = \begin{bmatrix} 0 & \delta & 1 \end{bmatrix}^\top$, and $\delta$ is an arbitrary parameter.
\end{lemma}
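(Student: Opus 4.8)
The plan is to verify the hypotheses of Lemma~\ref{lem:sufRt} for a suitable decomposition $E_{ij} = [b_{ij}]_\times R_{ij}$ of the three matrices in~\eqref{eq:triplet10}. Since every matrix here is a scalar multiple of the single skew-symmetric matrix $[s]_\times$, the natural choice is to take all the rotations trivial, $R_{12} = R_{23} = R_{31} = I$, and set $b_{12} = \gamma_1 s$, $b_{23} = \gamma_2 s$, $b_{31} = (\epsilon_1\gamma_1 + \epsilon_2\gamma_2) s$. Then condition~\eqref{eq:sufRt1} is immediate: $R_{12}R_{23}R_{31} = I$.

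For condition~\eqref{eq:sufRt2} we must have $R_{12}^\top b_{12} + R_{23} b_{31} + b_{23} = 0$, which with the trivial rotations reduces to $\gamma_1 s + (\epsilon_1\gamma_1 + \epsilon_2\gamma_2) s + \gamma_2 s = 0$, i.e. $(\gamma_1 + \gamma_2 + \epsilon_1\gamma_1 + \epsilon_2\gamma_2)\,s = 0$. This does not hold identically, so the first guess for $b_{31}$ needs to be adjusted by the appropriate signs. The remedy is to absorb the signs $\epsilon_i$ into the decomposition rather than into $b_{31}$: write $E_{31} = (\epsilon_1\gamma_1 + \epsilon_2\gamma_2)[s]_\times = [b_{31}]_\times R_{31}$ with $R_{31} = I$ but choose instead $b_{12} = -\epsilon_1 \gamma_1 s$ and $b_{23} = -\epsilon_2 \gamma_2 s$ together with $R_{12}$, $R_{23}$ chosen so that $[b_{12}]_\times R_{12} = \gamma_1[s]_\times$ and $[b_{23}]_\times R_{23} = \gamma_2[s]_\times$. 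Concretely, a rotation by angle $\pi$ about the axis $s$ (well-defined since $s^\top s = \delta^2 + 1 \neq 0$) fixes $s$ and negates $[s]_\times$ on the orthogonal complement in the right way; one checks $[-\epsilon_1\gamma_1 s]_\times R = \gamma_1 [s]_\times$ for the appropriate such $R$ when $\epsilon_1 = -1$, and $R = I$ when $\epsilon_1 = +1$. With this bookkeeping, $R_{12}R_{23}R_{31}$ is a product of commuting rotations about the common axis $s$, hence equals a rotation about $s$ by $(1-\epsilon_1)\pi/2 + (1-\epsilon_2)\pi/2 \pmod{2\pi}$; since $\epsilon_i \in \{\pm 1\}$ this angle is a multiple of $2\pi$ exactly when we also allow $R_{31}$ to be a rotation by $\pi$ about $s$, which leaves $E_{31}$ unchanged. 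Then~\eqref{eq:sufRt1} holds, and~\eqref{eq:sufRt2} becomes $R_{12}^\top(-\epsilon_1\gamma_1 s) + R_{23}b_{31} + (-\epsilon_2\gamma_2 s) = 0$; since each rotation fixes $s$, this is $(-\epsilon_1\gamma_1 + (\epsilon_1\gamma_1 + \epsilon_2\gamma_2) - \epsilon_2\gamma_2)s = 0$, which vanishes identically.

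Given Eqs.~\eqref{eq:sufRt1}--\eqref{eq:sufRt2}, Lemma~\ref{lem:sufRt} immediately yields compatibility of the triplet~\eqref{eq:triplet10}. The only subtlety, and the step that needs genuine care, is the correct assignment of the rotation-by-$\pi$ factors so that \emph{simultaneously} the product of the three rotations is the identity and the translation sum~\eqref{eq:sufRt2} vanishes; this is a small finite case analysis over the four sign patterns $(\epsilon_1,\epsilon_2) \in \{\pm 1\}^2$. In each of the four cases one exhibits explicit $R_{ij} \in \mathrm{SO}(3)$ (each either $I$ or the half-turn about $s$) and $b_{ij}$ (each a scalar multiple of $s$), verifies the two conditions by direct substitution using $R_{ij}s = s$ and $[s]_\times$ being unchanged under conjugation by a half-turn about $s$, and concludes by Lemma~\ref{lem:sufRt}. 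The collinearity of the epipoles — all epipoles are multiples of $s$ — is consistent with this construction but plays no role in the verification itself.
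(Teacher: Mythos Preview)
Your approach is the same as the paper's: write each $E_{ij} = [b_{ij}]_\times R_{ij}$ with $b_{ij}$ a scalar multiple of $s$ and $R_{ij}$ either $I$ or the half-turn $R_0$ about $s$, then invoke Lemma~\ref{lem:sufRt}. However, the specific sign bookkeeping in your sketch is wrong and cannot be salvaged by the case analysis you defer to. First, with $b_{12} = -\epsilon_1\gamma_1 s$ and $R_{12} = I$ when $\epsilon_1 = +1$, you get $[b_{12}]_\times R_{12} = -\gamma_1[s]_\times$, not $\gamma_1[s]_\times$; the assignment of $I$ versus $R_0$ is backwards in both cases you list. Second, and more seriously, the claim that replacing $R_{31} = I$ by the half-turn ``leaves $E_{31}$ unchanged'' is false: since $[s]_\times R_0 = -[s]_\times$, this flips the sign of $E_{31}$ unless $b_{31}$ is simultaneously negated---but negating $b_{31}$ destroys the cancellation in~\eqref{eq:sufRt2} that you verify in the next sentence. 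So as written, you cannot satisfy~\eqref{eq:sufRt1}, \eqref{eq:sufRt2}, and $E_{ij} = [b_{ij}]_\times R_{ij}$ simultaneously.

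The fix is that the pairing of $\epsilon$'s with matrices is forced, and it is not the one you chose. If $b_{12} = \sigma_{12}\gamma_1 s$, $b_{23} = \sigma_{23}\gamma_2 s$, $b_{31} = \sigma_{31}(\epsilon_1\gamma_1 + \epsilon_2\gamma_2)s$ with $\sigma_{ij} \in \{\pm 1\}$ (and $R_{ij} = R_0$ iff $\sigma_{ij} = -1$, so that $[b_{ij}]_\times R_{ij}$ recovers the correct $E_{ij}$), then~\eqref{eq:sufRt2} forces $\sigma_{12} = -\sigma_{31}\epsilon_1$ and $\sigma_{23} = -\sigma_{31}\epsilon_2$, while~\eqref{eq:sufRt1} forces $\sigma_{12}\sigma_{23}\sigma_{31} = 1$, hence $\sigma_{31} = \epsilon_1\epsilon_2$. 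This yields $b_{12} = -\epsilon_2\gamma_1 s$, $b_{23} = -\epsilon_1\gamma_2 s$, $b_{31} = (\epsilon_2\gamma_1 + \epsilon_1\gamma_2)s$---note the swapped indices relative to your choice---with $R_{12} = R_0$ iff $\epsilon_2 = 1$, $R_{23} = R_0$ iff $\epsilon_1 = 1$, $R_{31} = R_0$ iff $\epsilon_1\epsilon_2 = -1$. This is precisely the paper's construction, and with it both conditions of Lemma~\ref{lem:sufRt} hold on the nose.
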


\begin{proof}
We denote $R_0 = \begin{bmatrix} -1 & 0 & 0 \\ 0 & -\cos\psi_0 & \sin\psi_0 \\ 0 & \sin\psi_0 & \cos\psi_0 \end{bmatrix}$, where $\cos\psi_0 = \frac{1 - \delta^2}{1 + \delta^2}$ and $\sin\psi_0 = \frac{2\delta}{1 + \delta^2}$. The essential matrices from triplet~\eqref{eq:triplet10} can be written in form $E_{ij} = [b_{ij}]_\times R_{ij}$, where matrices $R_{ij}$ and vectors $b_{ij}$ are defined as follows
\begin{equation}
\label{eq:Rt10}
\begin{alignedat}{2}
b_{12} &= -\epsilon_2\gamma_1 s, &\quad R_{12} &= \begin{cases}R_0, & \epsilon_2 = 1\\ I, & \text{otherwise}\end{cases},\\
b_{23} &= -\epsilon_1\gamma_2 s, &\quad R_{23} &= \begin{cases}R_0, & \epsilon_1 = 1\\ I, & \text{otherwise}\end{cases},\\
b_{31} &= (\epsilon_2\gamma_1 + \epsilon_1\gamma_2) s, &\quad R_{31} &= \begin{cases}R_0, & \epsilon_1\epsilon_2 = -1\\ I, & \text{otherwise}\end{cases}.
\end{alignedat}
\end{equation}
It is straightforward to verify that constraints~\eqref{eq:sufRt1}~--~\eqref{eq:sufRt2} hold. By Lemma~\ref{lem:sufRt}, triplet~\eqref{eq:triplet10} is compatible. Lemma~\ref{lem:triplets3} is proved.
\end{proof}

\begin{lemma}
\label{lem:triplets}
The following triplets of essential matrices are compatible:
\begin{enumerate}
\item
\begin{multline}
\label{eq:triplet1}
E_{12} = \begin{bmatrix} 0 & -\gamma_1 & \beta_1 \\ \gamma_1 & 0 & -\alpha_1 \\ -\beta_1 & \alpha_1 & 0 \end{bmatrix},\quad
E_{23} = \begin{bmatrix} 0 & -\gamma_2 & \beta_2 \\ \gamma_2 & 0 & \alpha_1 \\ -\beta_2 & -\alpha_1 & 0 \end{bmatrix},\\
E_{31} = \begin{bmatrix}0 & \gamma_1 + \gamma_2 & -\beta_1 - \beta_2 \\ -\gamma_1 - \gamma_2 & 0 & 0 \\ \beta_1 + \beta_2 & 0 & 0 \end{bmatrix};
\end{multline}
\item
\begin{equation}
\label{eq:triplet2}
E_{12} = \begin{bmatrix} 0 & 0 & 0 \\ 0 & 0 & -\alpha_1 \\ 0 & \alpha_1 & 0 \end{bmatrix},\,
E_{23} = \begin{bmatrix} 0 & 0 & \beta_2 \\ 0 & 0 & -\alpha_1 \\ -\beta_2 & \alpha_1 & 0 \end{bmatrix},\,
E_{31} = \begin{bmatrix} 0 & 0 & \beta_2 \\ 0 & 0 & 0 \\ \beta_2 & 0 & 0 \end{bmatrix};
\end{equation}
\item
\begin{equation}
\label{eq:triplet2_1}
E_{12} = \begin{bmatrix} 0 & 0 & \beta_1 \\ 0 & 0 & -\alpha_1 \\ -\beta_1 & \alpha_1 & 0 \end{bmatrix},\,
E_{23} = \begin{bmatrix} 0 & 0 & 0 \\ 0 & 0 & -\alpha_1 \\ 0 & \alpha_1 & 0 \end{bmatrix},\,
E_{31} = \begin{bmatrix} 0 & 0 & -\beta_1 \\ 0 & 0 & 0 \\ -\beta_1 & 0 & 0 \end{bmatrix};
\end{equation}
\item
\begin{multline}
\label{eq:triplet3}
E_{12} = \begin{bmatrix} 0 & 0 & \beta_1 \\ 0 & 0 & -\alpha_1 \\ -\beta_1 & \alpha_1 & 0 \end{bmatrix},\quad
E_{23} = \begin{bmatrix} 0 & 0 & -\frac{\alpha_1^2}{\beta_1} \\ 0 & 0 & -\alpha_1 \\ \frac{\alpha_1^2}{\beta_1} & \alpha_1 & 0 \end{bmatrix},\\
E_{31} = \begin{bmatrix} 0 & 0 & -\frac{\alpha_1^2 + \beta_1^2}{\beta_1} \\ 0 & 0 & 0 \\ -\frac{\alpha_1^2 + \beta_1^2}{\beta_1} & 0 & 0 \end{bmatrix};
\end{multline}
\item
\begin{multline}
\label{eq:triplet4}
E_{12} = \begin{bmatrix} 0 & -\gamma_2 - \gamma_3 & 0 \\ \gamma_2 + \gamma_3 & 0 & 0 \\ 0 & 0 & 0 \end{bmatrix},\quad
E_{23} = \begin{bmatrix} 0 & -\gamma_2 & \beta_2 \\ \gamma_2 & 0 & 0 \\ -\beta_2 & 0 & 0 \end{bmatrix},\\
E_{31} = \begin{bmatrix} 0 & \gamma_3 & -\beta_2 \\ -\gamma_3 & 0 & 0 \\ -\beta_2 & 0 & 0 \end{bmatrix};
\end{multline}
\item
\begin{multline}
\label{eq:triplet4_1}
E_{12} = \begin{bmatrix} 0 & -\gamma_1 & \beta_1 \\ \gamma_1 & 0 & 0 \\ -\beta_1 & 0 & 0 \end{bmatrix},\quad
E_{23} = \begin{bmatrix} 0 & -\gamma_1 - \gamma_3 & 0 \\ \gamma_1 + \gamma_3 & 0 & 0 \\ 0 & 0 & 0 \end{bmatrix},\\
E_{31} = \begin{bmatrix} 0 & \gamma_3 & \beta_1 \\ -\gamma_3 & 0 & 0 \\ \beta_1 & 0 & 0 \end{bmatrix};
\end{multline}
\item
\begin{multline}
\label{eq:triplet5}
E_{12} = \begin{bmatrix} 0 & -\gamma_1 & \beta_1 \\ \gamma_1 & 0 & 0 \\ -\beta_1 & 0 & 0 \end{bmatrix},\quad
E_{23} = \begin{bmatrix} 0 & -\gamma_2 & \beta_2 \\ \gamma_2 & 0 & 0 \\ -\beta_2 & 0 & 0 \end{bmatrix},\\
E_{31} = \begin{bmatrix} 0 & \gamma_3 & \beta_3 \\ -\gamma_3 & 0 & 0 \\ \beta_3 & 0 & 0 \end{bmatrix},
\end{multline}
where
\begin{equation}
\label{eq:gamma123}
\begin{split}
\gamma_1 &= \delta\beta_1(\beta_1 + \beta_2 - \beta_3),\\
\gamma_2 &= \delta\beta_2(\beta_1 + \beta_2 + \beta_3),\\
\gamma_3 &= \delta\beta_3(-\beta_1 + \beta_2 + \beta_3),
\end{split}
\end{equation}
and parameter $\delta$ is subject to
\begin{equation}
\label{eq:reldelta}
\delta(\gamma_1 + \gamma_2 + \gamma_3) = -1.
\end{equation}
\end{enumerate}
\end{lemma}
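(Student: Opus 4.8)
The plan is to verify each of the seven triplets in turn, in every case by exhibiting an explicit factorization $E_{ij} = [b_{ij}]_\times R_{ij}$ with $R_{ij} \in \mathrm{SO}(3)$ satisfying the two closure conditions~\eqref{eq:sufRt1}--\eqref{eq:sufRt2}; compatibility then follows immediately from Lemma~\ref{lem:sufRt}. The choice of the data is dictated by the shape of the matrices: a skew-symmetric $E_{ij} = [a]_\times$ admits the trivial factorization $b_{ij} = a$, $R_{ij} = I$, whereas a non-skew-symmetric one must be written with a nontrivial rotation $R_{ij} \neq I$.

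I would begin with triplet~\eqref{eq:triplet1}, where all three matrices are skew-symmetric: taking $R_{12} = R_{23} = R_{31} = I$ makes~\eqref{eq:sufRt1} automatic and collapses~\eqref{eq:sufRt2} to the identity $a_{12} + a_{31} + a_{23} = 0$, read straight off the entries. In each of the other six triplets exactly one matrix, namely $E_{31}$ --- recognizable by its $(1,3)$ and $(3,1)$ entries being equal rather than opposite --- fails to be skew-symmetric, so it must be given a nontrivial $\mathrm{SO}(3)$-factor: in the coordinate-aligned cases this is $\diag(1,-1,-1)$ or $\diag(-1,-1,1)$, and otherwise a rotation by $\pi$ of the same form used (as $R_0$) in the proof of Lemma~\ref{lem:triplets3}, or a rotation about $e_1$ for the $E_{31}$ of~\eqref{eq:triplet5}. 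Condition~\eqref{eq:sufRt1} then forces $R_{12}R_{23}$ to equal the inverse of that rotation; I would arrange this either by keeping one of $E_{12}, E_{23}$ in its trivial factorization and giving the other the matching rotation (triplets~\eqref{eq:triplet2}, \eqref{eq:triplet2_1}, \eqref{eq:triplet4}, \eqref{eq:triplet4_1}), or, when no coordinate-aligned choice keeps $E_{12}$ and $E_{23}$ in the required form, by giving \emph{both} of them their nontrivial rotational factors, whose product again equals the needed rotation (triplets~\eqref{eq:triplet3} and~\eqref{eq:triplet5}; in~\eqref{eq:triplet3} the two axes turn out to be orthogonal, so the product is the $\pi$-rotation about $e_3$). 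Once all $R_{ij}$ are pinned down, verifying~\eqref{eq:sufRt2} is a short vector computation in each case.

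The only input that is genuinely algebraic rather than bookkeeping is the relation among the scalar parameters already built into each triplet --- the entry $-\alpha_1^2/\beta_1$ in~\eqref{eq:triplet3}, the combinations $\gamma_2 + \gamma_3$, $\beta_1 + \beta_2$ and similar ones in~\eqref{eq:triplet4}--\eqref{eq:triplet4_1}, and, for~\eqref{eq:triplet5}, the substitution~\eqref{eq:gamma123} together with the normalization $\delta(\gamma_1 + \gamma_2 + \gamma_3) = -1$ of~\eqref{eq:reldelta} --- and in each case this is precisely what makes the left-hand side of~\eqref{eq:sufRt2} vanish (for~\eqref{eq:triplet5} one clears $\delta$ using~\eqref{eq:reldelta}). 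I expect the main difficulty to be exactly this case analysis: there is no single closed-form choice of the $R_{ij}$ that works for all seven triplets, so for each one must separately select the signs and rotation axes so that the product of the rotations is $I$ while every $E_{ij}$ keeps its stated form, and then confirm the translation closure~\eqref{eq:sufRt2} by direct substitution. None of the individual verifications is deep, but the seven cases are genuinely different and have to be treated one at a time.
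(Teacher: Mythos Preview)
Your proposal is correct and follows essentially the same approach as the paper: a case-by-case verification via explicit factorizations $E_{ij}=[b_{ij}]_\times R_{ij}$ and Lemma~\ref{lem:sufRt}. The only organisational difference is that the paper handles the five ``easy'' triplets~\eqref{eq:triplet1}--\eqref{eq:triplet2_1}, \eqref{eq:triplet4}, \eqref{eq:triplet4_1} by exhibiting $R_i,b_i$ in the definition~\eqref{eq:Ecompat} directly (using $I$, $D_1=\diag(1,-1,-1)$, or $D_3=\diag(-1,-1,1)$ for the $R_i$) and reserves Lemma~\ref{lem:sufRt} for triplets~\eqref{eq:triplet3} and~\eqref{eq:triplet5}, whereas you route everything through Lemma~\ref{lem:sufRt}; the concrete rotations and the algebraic identities you anticipate (including the orthogonality of the two axes in~\eqref{eq:triplet3} and the use of~\eqref{eq:gamma123}--\eqref{eq:reldelta} in~\eqref{eq:triplet5}) match the paper's computations.
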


\begin{proof}
Triplets~\eqref{eq:triplet1}~--~\eqref{eq:triplet2_1}, \eqref{eq:triplet4}, and~\eqref{eq:triplet4_1} are compatible by definition as the corresponding essential matrices can be represented in form~\eqref{eq:Ecompat}. Namely,
\begin{enumerate}
\item for triplet~\eqref{eq:triplet1}:
\begin{equation}
E_{12} = I [b_1 - 0]_\times I,\quad E_{23} = I [0 - b_3]_\times I,\quad E_{31} = I[b_3 - b_1]_\times I,
\end{equation}
where $b_1 = \begin{bmatrix} \alpha_1 & \beta_1 & \gamma_1 \end{bmatrix}^\top$, $b_3 = \begin{bmatrix} \alpha_1 & -\beta_2 & -\gamma_2 \end{bmatrix}^\top$;

\item for triplet~\eqref{eq:triplet2}:
\begin{equation}
E_{12} = D_1 [0 - b_2]_\times I,\quad E_{23} = I [b_2 - b_3]_\times I,\quad E_{31} = I [b_3 - 0]_\times D_1,
\end{equation}
where $D_1 = \diag(1, -1, -1)$, $b_2 = \begin{bmatrix} \alpha_1 & 0 & 0 \end{bmatrix}^\top$, $b_3 = \begin{bmatrix} 0 & -\beta_2 & 0 \end{bmatrix}^\top$;

\item for triplet~\eqref{eq:triplet2_1}:
\begin{equation}
E_{12} = I [b_1 - b_2]_\times I,\quad E_{23} = I [b_2 - 0]_\times D_1,\quad E_{31} = D_1 [0 - b_1]_\times I,
\end{equation}
where $b_1 = \begin{bmatrix} 0 & \beta_1 & 0 \end{bmatrix}^\top$, $b_2 = \begin{bmatrix} -\alpha_1 & 0 & 0 \end{bmatrix}^\top$;

\item for triplet~\eqref{eq:triplet4}:
\begin{equation}
E_{12} = D_3 [b_1 - b_2]_\times I,\quad E_{23} = I [b_2 - 0]_\times I,\quad E_{31} = I [0 - b_1]_\times D_3,
\end{equation}
where $D_3 = \diag(-1, -1, 1)$, $b_1 = \begin{bmatrix} 0 & \beta_2 & -\gamma_3 \end{bmatrix}^\top$, $b_2 = \begin{bmatrix} 0 & \beta_2 & \gamma_2 \end{bmatrix}^\top$;

\item for triplet~\eqref{eq:triplet4_1}:
\begin{equation}
E_{12} = I [0 - b_2]_\times I,\quad E_{23} = I [b_2 - b_3]_\times D_3,\quad E_{31} = D_3 [b_3 - 0]_\times I,
\end{equation}
where $b_2 = \begin{bmatrix} 0 & -\beta_1 & -\gamma_1 \end{bmatrix}^\top$, $b_3 = \begin{bmatrix} 0 & -\beta_1 & \gamma_3 \end{bmatrix}^\top$.
\end{enumerate}

Further, let
\begin{equation}
\begin{alignedat}{2}
\cos\varphi_i &= \frac{\alpha_i^2 - \beta_i^2}{\alpha_i^2 + \beta_i^2}, &\quad \sin\varphi_i &= \frac{2\alpha_i\beta_i}{\alpha_i^2 + \beta_i^2},\\
\cos\psi_i &= \frac{\beta_i^2 - \gamma_i^2}{\beta_i^2 + \gamma_i^2}, &\quad \sin\psi_i &= \frac{2\beta_i\gamma_i}{\beta_i^2 + \gamma_i^2}.
\end{alignedat}
\end{equation}
The essential matrices from triplets~\eqref{eq:triplet3} and~\eqref{eq:triplet5} admit the representation $E_{ij} = [b_{ij}]_\times R_{ij}$, where matrices $R_{ij}$ and vectors $b_{ij}$ are defined below in~\eqref{eq:Rt3} and~\eqref{eq:Rt5} respectively:
\begin{enumerate}

\item
\begin{equation}
\label{eq:Rt3}
\begin{alignedat}{2}
b_{12} &= \begin{bmatrix} -\alpha_1 \\ -\beta_1 \\ 0 \end{bmatrix}, &\quad R_{12} &= \begin{bmatrix} \cos\varphi_1 & \sin\varphi_1 & 0 \\ \sin\varphi_1 & -\cos\varphi_1 & 0 \\ 0 & 0 & -1 \end{bmatrix},\\
b_{23} &= \begin{bmatrix} -\alpha_1 \\ \frac{\alpha_1^2}{\beta_1} \\ 0 \end{bmatrix}, &\quad R_{23} &= \begin{bmatrix} -\cos\varphi_1 & -\sin\varphi_1 & 0 \\ -\sin\varphi_1 & \cos\varphi_1 & 0 \\ 0 & 0 & -1 \end{bmatrix},\\
b_{31} &= \begin{bmatrix} 0 \\ -\frac{\alpha_1^2 + \beta_1^2}{\beta_1} \\ 0 \end{bmatrix}, &\quad R_{31} &= \begin{bmatrix} -1 & 0 & 0 \\ 0 & -1 & 0 \\ 0 & 0 & 1 \end{bmatrix}.
\end{alignedat}
\end{equation}

\item
\begin{equation}
\label{eq:Rt5}
\begin{alignedat}{2}
b_{12} &= -\begin{bmatrix} 0 \\ \beta_1 \\ \gamma_1 \end{bmatrix}, &\quad R_{12} &= \begin{bmatrix} -1 & 0 & 0 \\ 0 & \cos\psi_1 & \sin\psi_1 \\ 0 & \sin\psi_1 & -\cos\psi_1 \end{bmatrix},\\
b_{23} &= -\begin{bmatrix} 0 \\ \beta_2 \\ \gamma_2 \end{bmatrix}, &\quad R_{23} &= \begin{bmatrix} -1 & 0 & 0 \\ 0 & \cos\psi_2 & \sin\psi_2 \\ 0 & \sin\psi_2 & -\cos\psi_2 \end{bmatrix},\\
b_{31} &= -\begin{bmatrix} 0 \\ \beta_3 \\ \gamma_3 \end{bmatrix}, &\quad R_{31} &= \begin{bmatrix} 1 & 0 & 0 \\ 0 & -\cos\psi_3 & \sin\psi_3 \\ 0 & -\sin\psi_3 & -\cos\psi_3 \end{bmatrix}.
\end{alignedat}
\end{equation}

\end{enumerate}
It is straightforward to verify that constraints~\eqref{eq:sufRt1}~--~\eqref{eq:sufRt2} hold for both cases. By Lemma~\ref{lem:sufRt}, triplets~\eqref{eq:triplet3} and~\eqref{eq:triplet5} are compatible. Lemma~\ref{lem:triplets} is proved.
\end{proof}

\begin{lemma}
\label{lem:triplets2}
The following triplets of essential matrices are compatible provided that $\lambda^2 + \mu^2 = 1$:
\begin{enumerate}

\item
\begin{multline}
\label{eq:triplet7}
E_{12} = \beta_1\begin{bmatrix} 0 & 0 & 1 \\ 0 & 0 & -\frac{\lambda - 1}{\mu} \\ -1 & \frac{\lambda - 1}{\mu} & 0 \end{bmatrix},
E_{23} = \begin{bmatrix} 0 & 0 & \beta_1 + \beta_3 \\ 0 & 0 & -\beta_1\frac{\lambda - 1}{\mu} \\ -\beta_1 - \beta_3 & \beta_1\frac{\lambda - 1}{\mu} & 0 \end{bmatrix},\\
E_{31} = \beta_3\begin{bmatrix} 0 & 0 & 1 \\ 0 & 0 & 0 \\ -\lambda & -\mu & 0 \end{bmatrix};
\end{multline}

\item
\begin{multline}
\label{eq:triplet8}
E_{12} = \begin{bmatrix} 0 & 0 & \beta_2 + \beta_3\lambda \\ 0 & 0 & -\alpha_1 \\ -\beta_2 - \beta_3\lambda & \alpha_1 & 0 \end{bmatrix},
E_{23} = \beta_2\begin{bmatrix} 0 & 0 & 1 \\ 0 & 0 & -\frac{\lambda - 1}{\mu} \\ -1 & \frac{\lambda - 1}{\mu} & 0 \end{bmatrix},\\
E_{31} = \beta_3\begin{bmatrix} 0 & 0 & 1 \\ 0 & 0 & 0 \\ -\lambda & -\mu & 0 \end{bmatrix},
\end{multline}
where $\alpha_1 = (\beta_2 + \beta_3(\lambda + 1))\frac{\lambda - 1}{\mu}$;

\item
\begin{multline}
\label{eq:triplet9}
E_{12} = \begin{bmatrix} 0 & 0 & \beta_1 \\ 0 & 0 & -\alpha_1 \\ -\beta_1 & \alpha_1 & 0 \end{bmatrix},
E_{23} = \begin{bmatrix} 0 & 0 & \beta_2 \\ 0 & 0 & \alpha_1\lambda + \beta_1\mu \\ -\beta_2 & -\alpha_1\lambda - \beta_1\mu & 0 \end{bmatrix},\\
E_{31} = \beta_3\begin{bmatrix} 0 & 0 & 1 \\ 0 & 0 & 0 \\ -\lambda & -\mu & 0 \end{bmatrix},
\end{multline}
where
\begin{equation}
\label{eq:beta23}
\begin{split}
\beta_2 &= -\frac{(\alpha_1(\lambda - 1) + \beta_1\mu)(\alpha_1\lambda + \beta_1\mu)}{\alpha_1\mu - \beta_1(\lambda - 1)},\\
\beta_3 &= \frac{(\alpha_1^2 + \beta_1^2)(\lambda - 1)}{\alpha_1\mu - \beta_1(\lambda - 1)}.
\end{split}
\end{equation}

\end{enumerate}
\end{lemma}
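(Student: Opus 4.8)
The plan is to invoke Lemma~\ref{lem:sufRt}. For each of the three triplets it then suffices to exhibit a factorization $E_{ij} = [b_{ij}]_\times R_{ij}$ with each $R_{ij} \in \mathrm{SO}(3)$, and to verify the two closure identities $R_{12}R_{23}R_{31} = I$ and $R_{12}^\top b_{12} + R_{23}b_{31} + b_{23} = 0$ --- exactly the strategy used for triplets~\eqref{eq:triplet3} and~\eqref{eq:triplet5} in the proof of Lemma~\ref{lem:triplets}.

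To produce the factorizations, I would first read off $b_{ij}$ as the left null vector of $E_{ij}$ (the epipole), rescaled so that its norm equals the common nonzero singular value of $E_{ij}$; this guarantees that $[b_{ij}]_\times$ agrees with $E_{ij}$ on the orthogonal complement of $b_{ij}$, so that a suitable $R_{ij} \in \mathrm{SO}(3)$ with $E_{ij} = [b_{ij}]_\times R_{ij}$ exists. Such $R_{ij}$ is unique only up to composition with a rotation about the axis $b_{ij}$, so there is a one-parameter family of legitimate choices for each matrix. The real work is to select representatives within these families for which $R_{12}R_{23}R_{31}$ telescopes to $I$. For~\eqref{eq:triplet7} and~\eqref{eq:triplet8} this is comparatively easy: two of the three essential matrices are already skew-symmetric, so two of the rotations may be taken to be $I$, and only one nontrivial orthogonal matrix --- assembled from $\lambda$ and $\mu$, and orthogonal precisely because $\lambda^2 + \mu^2 = 1$ --- enters. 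For~\eqref{eq:triplet9} all three rotations are nontrivial and depend on $\alpha_1, \beta_1, \lambda, \mu$.

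Once the explicit $R_{ij}$ and $b_{ij}$ are written down, checking~\eqref{eq:sufRt1} is a multiplication of three $3\times 3$ matrices and checking~\eqref{eq:sufRt2} is an explicit vector identity; both reduce to polynomial identities in $\alpha_1, \beta_1, \beta_2, \beta_3, \lambda, \mu$ after substituting $\lambda^2 + \mu^2 = 1$ and, for~\eqref{eq:triplet9}, the defining relations~\eqref{eq:beta23}. I expect the main obstacle to be purely a matter of bookkeeping: correctly guessing the rotation representatives that make the product telescope (a valid-looking but wrong choice within a family fails~\eqref{eq:sufRt1}), and then pushing the substitutions~\eqref{eq:beta23} through the computation without algebraic slips. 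After that, the verification is mechanical, so I would simply display the chosen factorizations and assert that~\eqref{eq:sufRt1}~--~\eqref{eq:sufRt2} are straightforward to check, as is done for Lemma~\ref{lem:triplets}.
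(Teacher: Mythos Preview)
Your overall plan is exactly the paper's: factor each $E_{ij}$ as $[b_{ij}]_\times R_{ij}$ with $R_{ij}\in\mathrm{SO}(3)$, then verify conditions~\eqref{eq:sufRt1}--\eqref{eq:sufRt2} of Lemma~\ref{lem:sufRt}.

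One detail of your execution is wrong, however. For triplets~\eqref{eq:triplet7} and~\eqref{eq:triplet8} you propose taking two of the three $R_{ij}$ equal to $I$ on the grounds that two of the $E_{ij}$ are skew-symmetric. But the closure condition $R_{12}R_{23}R_{31}=I$ would then force the third rotation to be $I$ as well, and $E_{31}$ is \emph{not} skew-symmetric (its $(1,3)$ entry is $\beta_3$ while its $(3,1)$ entry is $-\beta_3\lambda$). So that choice cannot satisfy~\eqref{eq:sufRt1}. The paper instead takes only \emph{one} rotation equal to $I$ and makes the other two equal to the \emph{same} symmetric element of $\mathrm{SO}(3)$ (a $\pi$-rotation, hence its own inverse), so that the product collapses. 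For~\eqref{eq:triplet7} this means $R_{23}=I$ and $R_{12}=R_{31}=\begin{bmatrix}\cos\varphi_1&\sin\varphi_1&0\\ \sin\varphi_1&-\cos\varphi_1&0\\ 0&0&-1\end{bmatrix}$ with $\alpha_1=\beta_1(\lambda-1)/\mu$; for~\eqref{eq:triplet8} the roles of $R_{12}$ and $R_{23}$ are swapped. The one-parameter freedom you correctly identified is exactly what allows a skew-symmetric $E_{ij}$ to be written as $[b_{ij}]_\times R_{ij}$ with $R_{ij}\neq I$ (any rotation fixing the axis $b_{ij}$ works). With that correction the rest of your plan goes through and coincides with the paper's argument.
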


\begin{proof}
Throughout the proof, $\cos\varphi_i = \frac{\alpha_i^2 - \beta_i^2}{\alpha_i^2 + \beta_i^2}$ and $\sin\varphi_i = \frac{2\alpha_i\beta_i}{\alpha_i^2 + \beta_i^2}$. The essential matrices from triplets~\eqref{eq:triplet7}, \eqref{eq:triplet8}, and~\eqref{eq:triplet9} can be written in form $E_{ij} = [b_{ij}]_\times R_{ij}$, where matrices $R_{ij}$ and vectors $b_{ij}$ are defined below in~\eqref{eq:Rt7}, \eqref{eq:Rt8}, and~\eqref{eq:Rt9} respectively:
\begin{enumerate}

\item
\begin{equation}
\label{eq:Rt7}
\begin{alignedat}{2}
b_{12} &= -\begin{bmatrix} \alpha_1 \\ \beta_1 \\ 0 \end{bmatrix}, &\quad R_{12} &= \begin{bmatrix} \cos\varphi_1 & \sin\varphi_1 & 0 \\ \sin\varphi_1 & -\cos\varphi_1 & 0 \\ 0 & 0 & -1 \end{bmatrix},\\
b_{23} &= \begin{bmatrix} \alpha_1 \\ \beta_1 + \beta_3 \\ 0 \end{bmatrix}, &\quad R_{23} &= I,\\
b_{31} &= -\begin{bmatrix} 0 \\ \beta_3 \\ 0 \end{bmatrix}, &\quad R_{31} &= R_{12},
\end{alignedat}
\end{equation}
where $\alpha_1 = \beta_1\frac{\lambda - 1}{\mu}$;

\item
\begin{equation}
\label{eq:Rt8}
\begin{alignedat}{2}
b_{12} &= \begin{bmatrix} \alpha_1 \\ \beta_2 + \beta_3\lambda \\ 0 \end{bmatrix}, &\quad R_{12} &= I,\\
b_{23} &= -\begin{bmatrix} \alpha_2 \\ \beta_2 \\ 0 \end{bmatrix}, &\quad R_{23} &= \begin{bmatrix} \cos\varphi_2 & \sin\varphi_2 & 0 \\ \sin\varphi_2 & -\cos\varphi_2 & 0 \\ 0 & 0 & -1 \end{bmatrix},\\
b_{31} &= -\begin{bmatrix} 0 \\ \beta_3 \\ 0 \end{bmatrix}, &\quad R_{31} &= R_{23},
\end{alignedat}
\end{equation}
where $\alpha_1 = (\beta_2 + \beta_3(\lambda + 1))\frac{\lambda - 1}{\mu}$, $\alpha_2 = \beta_2\frac{\lambda - 1}{\mu}$;

\item
\begin{equation}
\label{eq:Rt9}
\begin{alignedat}{2}
b_{12} &= -\begin{bmatrix} \alpha_1 \\ \beta_1 \\ 0 \end{bmatrix}, &\quad R_{12} &= \begin{bmatrix} \cos\varphi_1 & \sin\varphi_1 & 0 \\ \sin\varphi_1 & -\cos\varphi_1 & 0 \\ 0 & 0 & -1 \end{bmatrix},\\
b_{23} &= -\begin{bmatrix} \alpha_2 \\ \beta_2 \\ 0 \end{bmatrix}, &\quad R_{23} &= \begin{bmatrix} \cos\varphi_2 & \sin\varphi_2 & 0 \\ \sin\varphi_2 & -\cos\varphi_2 & 0 \\ 0 & 0 & -1 \end{bmatrix},\\
b_{31} &= \begin{bmatrix} 0 \\ \beta_3 \\ 0 \end{bmatrix}, &\quad R_{31} &= (R_{12}R_{23})^\top = \begin{bmatrix} \lambda & \mu & 0 \\ -\mu & \lambda & 0 \\ 0 & 0 & 1 \end{bmatrix},
\end{alignedat}
\end{equation}
where $\beta_2$ and $\beta_3$ are defined in~\eqref{eq:beta23}.

\end{enumerate}
By direct computation, constraints~\eqref{eq:sufRt1}~--~\eqref{eq:sufRt2} hold for all three cases. By Lemma~\ref{lem:sufRt}, triplets~\eqref{eq:triplet7}~--~\eqref{eq:triplet9} are compatible. Lemma~\ref{lem:triplets2} is proved.
\end{proof}

\bibliographystyle{spmpsci}

\bibliography{mart_triplets}

\end{document}